\documentclass{article}
\usepackage{ijcai26}

\usepackage{times}
\usepackage{soul}
\usepackage{url}
\usepackage[hidelinks]{hyperref}
\usepackage[utf8]{inputenc}
\usepackage[small]{caption}
\usepackage{graphicx}
\usepackage{amsmath}
\usepackage{amsthm}
\usepackage{booktabs}
\usepackage{algorithm}
\usepackage[switch]{lineno}

\usepackage{todonotes}
\usepackage{amsmath, amssymb, amsthm}
\usepackage{paralist}
\usepackage{xspace}
\usepackage{tikz}
\usetikzlibrary{calc,shapes,arrows}
\usepackage{thmtools, thm-restate}
\usepackage[noend]{algpseudocode}

\usepackage{caption}

\title{Monitoring Data-aware Temporal Properties (Extended Version)}
\author{
Alessandro Gianola$^1$
\and
Marco Montali$^2$\and
Sarah Winkler$^2$\\
\affiliations
$^1$INESC-ID/Instituto Superior Técnico, Universidade de Lisboa, Portugal\\
$^2$Free University of Bozen-Bolzano, Italy\\
\emails
alessandro.gianola@tecnico.ulisboa.pt,
\{montali, winkler\}@inf.unibz.it
}

\newcommand{\longversion}[2]{#1} 

\newcommand{\TT}{\mc T} 
\newcommand{\LL}{\mc L} 
\renewcommand{\SS}{\mc S} 
\newcommand{\PP}{\mc P} 
\newcommand{\FF}{\mc F} 
\newcommand{\eval}[2][i]{[#2]^{#1}_{\tau,\xi}} 
\newcommand{\LTLf}{LTL$_f$\xspace}
\newcommand{\LTLfMT}
{\smash{LTL$_f^{MT}$}}
\newcommand{\dataLTLf}{DB-LTL$_f$\xspace}

\newcommand{\PS}{\text{\textsc{ps}}}
\newcommand{\PV}{\textsc{pv}}
\newcommand{\CS}{\textsc{cs}}
\newcommand{\CV}{\textsc{cv}}

\newcommand{\mc}[1]{\mathcal{#1}}
\renewcommand{\vec}[1]{\overline{#1}}

\newcommand{\combine}[2]{[{#1}\,{\circledast}\,{#2}]} 
\newcommand{\U}{\mathrel{\mathsf{U}}} 
\newcommand{\G}{\mathsf{G}\xspace}
\newcommand{\F}{\mathsf{F}\xspace}
\newcommand{\R}{\mathrel{\mathsf{R}}}
\newcommand{\X}{\mathsf{X}\xspace}
\newcommand{\wX}{\mathsf{X}_w\xspace}
\newcommand{\sX}{\mathsf{X}_s\xspace}
\newcommand{\NFA}[1][\psi]{\mathcal N_{#1}}

\newcommand{\CG}{\textup{CG}} 
\newcommand{\regress}{\mathit{regress}} 
\newcommand{\precond}{\textsc{rc}} 
 
\newcommand{\goto}[1]{\mathrel{\raisebox{-2pt}{$\xrightarrow{#1}$}}}
\newcommand{\gotos}[1]{\to_{#1}^*}
\newcommand{\inn}{\,{\in}\,} 

\newcommand{\inquotes}[1]{#1} 
\newcommand{\prev}[1]{\lhd{#1}}
\newcommand{\Vpre}{V^\lhd}
\newcommand{\nolambda}[1]{[{#1}]}

\newcommand{\m}[1]{\mathsf{#1}}
\newcommand{\sat}{\textsc{ExtSat}}
\newcommand{\unsat}{\textsc{ExtViol}}
\newcommand{\llbracket}{[\![}
\newcommand{\rrbracket}{]\!]}

\makeatletter
\newcommand*{\owedge}{%
  \sbox0{$\oplus\m@th$}%
  \dimen2=.5\dimexpr\wd0-\ht0-\dp0\relax 
  \dimen@=\dimexpr\ht0+\dp0\relax
  \def\lw{.04}
  \def\radius{.5-\lw/2}%
  \kern\dimen2 
  \tikz[
    line width=\lw\dimen@,
    line join=round,
    x=\dimen@,
    y=\dimen@,
    baseline=\dimexpr-.5\dimen@+\dp0\relax,
  ]
  \draw
    (0,0) circle[radius=\radius]
    (225:\radius) -- (0,.5-\lw) -- (-45:\radius)
  ;%
  \kern\dimen2 
}
\newcommand*{\ovee}{%
  \sbox0{$\oplus\m@th$}%
  \dimen2=.5\dimexpr\wd0-\ht0-\dp0\relax 
  \dimen@=\dimexpr\ht0+\dp0\relax
  \def\lw{.04}
  \def\radius{.5-\lw/2}%
  \kern\dimen2 
  \tikz[
    line width=\lw\dimen@,
    line join=round,
    x=\dimen@,
    y=\dimen@,
    baseline=\dimexpr-.5\dimen@+\dp0\relax,
  ]
  \draw
    (0,0) circle[radius=\radius]
    (-225:\radius) -- (0,-.5+\lw) -- (45:\radius)
  ;%
  \kern\dimen2 
}
\makeatother

\newcommand{\domino}[2]{\text{\tikz[baseline=-0.5ex]{\node[scale=.7, inner sep=0pt]{$%
\left\{\begin{array}{@{\,}l@{=}l@{\,}}t&{#1}\\b&{#2}\\\end{array}\right\}$}}}}

\renewcommand{\sim}{\equiv_{\TT^*}}

\tikzstyle{state}=[draw, circle, inner sep=1.5pt, line width=.7pt, scale=.6]
\tikzstyle{edge}=[draw, ->, line width=.5pt]
\tikzstyle{action}=[scale=.55]
\tikzstyle{caption}=[scale=.9]
\tikzstyle{node} = [draw,rectangle split, rectangle split parts=2,rectangle split horizontal, rectangle split draw splits=true, inner sep=3pt, scale=.65, rounded corners=2pt]
\tikzstyle{goto} = [->]
\tikzstyle{action}=[scale=.6, black]
\tikzstyle{final}=[double]
\tikzstyle{pscolor}=[fill=green!30]
\tikzstyle{cscolor}=[fill=cyan!30]
\tikzstyle{pvcolor}=[fill=red!30]
\tikzstyle{cvcolor}=[fill=orange!30]

\newtheorem{theorem}{Theorem}

\newtheorem{example}[theorem]{Example}
\theoremstyle{definition}
\newtheorem{definition}[theorem]{Definition}
\newtheorem{lemma}[theorem]{Lemma}
\newtheorem{corollary}[theorem]{Corollary}

\newtheorem{assumptions}[theorem]{Assumptions}

\newcommand{\ltlf}{\textsc{LTL}$_f$\xspace}
\newcommand{\tool}{\textsc{MonThe}\xspace}

\newcommand{\trace}[1][n-1]{(M, \langle\alpha_0,\dots,\alpha_{#1}\rangle)}
\newcommand{\Mcarrier}{|M|} 

\DeclareMathOperator\wnextvar{\bigcirc\kern-1.17em\sim\kern0.2em}

\renewcommand{\phi}{\varphi}

\newcommand{\qfLTLfMT}{data\LTLf}

\begin{document}
\maketitle

\begin{abstract}
Dynamic systems in AI are often complex and heterogeneous, so that an internal specification is not accessible and verification techniques such as model checking are not applicable.
Monitoring is in such cases an attractive alternative, as it evaluates desirable properties along traces generated by an unknown dynamic system. In this work, we consider anticipatory monitoring of linear-time properties enriched with an arbitrary SMT theory over finite traces (\LTLfMT).
Anticipatory monitoring in this setting is highly challenging,
as the monitoring state depends on both the trace prefix seen so far and all its possible finite continuations.
Under reasonable assumptions on the background theory, we present and formally prove the correctness of a novel foundational framework for monitoring properties in an expressive fragment of \LTLfMT. The framework combines automata-theoretic methods to handle the temporal aspects of the logic, with automated reasoning techniques to address the first-order dimension.
Moreover, we identify for the first time decidable fragments of this monitoring problem that are practically relevant as they combine linear arithmetic with uninterpreted functions, which covers e.g. data-aware business processes and dynamic systems operating over a read-only database.
Feasibility is witnessed by a prototype implementation and preliminary evaluation. 
\end{abstract}

\section{Introduction}

Complex, dynamic AI systems increasingly comprise autonomous agents and heterogeneous components with unknown, inaccessible, or opaque internal specifications. This hampers the possibility of ascertaining their safety and trustworthiness through traditional design-time verification methods such as model checking and testing.
A remedy in such situations is to keep track of desired
properties at runtime, by \emph{monitoring} the executions of the black-box system under scrutiny.
A widely known, solid approach to monitoring is that of \emph{runtime verification} (RV), where given a logical property, a corresponding monitor can be automatically constructed, with the guarantee that it will
emit a provably correct verdict~\cite{LeuS09}.
These advantages have fostered the application of RV in various AI contexts, including multiagent systems~\cite{DaTY18,AlechinaHKL18}, robotics~\cite{BegemannKLS23}, cyber-physical systems~\cite{TracyTVS20}, and business processes~\cite{LMMR15,CDMP22}. The vast majority of such approaches specify properties using variants of linear-time temporal logic (LTL). 

\citeauthor{LeuS09} (\citeyear{LeuS09}) point out two key insights in monitoring:
First, since the trace $\tau$ seen so far is necessarily of finite length, this calls for a finite-trace semantics for LTL.
However, as the system evolves, $\tau$ is only the prefix of a (yet unknown) continuation. Different variants of LTL then have to be studied depending on whether such a continuation eventually terminates, or continues forever. 
In many AI-relevant settings (e.g., planning and business processes \cite{BacK00,BaBM09,DeGV13,GMM14}) the monitored system operates for an unbounded, yet finite, number of steps. This means that also each one of the (infinitely many) possible suffixes of $\tau$ has a finite length, making \ltlf (LTL over finite traces \cite{DeGV13}) a natural candidate logic for monitoring \cite{DFIP20,DDMM22,CDMP22}.

A key limitation of 
many of these
approaches 
is that \ltlf is propositional and cannot express properties about structured data. This is ubiquitous in AI systems, as witnessed by the new agentic wave, where agents are often delegated to execute business tasks and processes, and have consequently to deal with different types of structured data, such as numbers with arithmetic, sets, lists, and uninterpreted functions. Recently, ~\citeauthor{GGG22} (\citeyear{GGG22}) brought forward \LTLf modulo theories (\LTLfMT) to enrich \LTLf with the full spectrum of theories supported by state-of-the-art SMT \cite{BarrettSST21} solvers, which include all the datatypes mentioned before. The following example illustrates the need for such rich properties:

\newcommand{\myc}{\mathtt{myc}}
\newcommand{\tvar}{t}
\newcommand{\cvar}{c}
\newcommand{\bvar}{b}
\newcommand{\ttype}{\mathit{Ticket}}
\newcommand{\ctype}{\mathit{Concert}}
\newcommand{\cfun}{\mathop{con}}
\newcommand{\pfun}{\mathop{price}}
\renewcommand{\undef}{\mathtt{undef}}
\begin{example}
\label{ex:ticket}
\label{ex:concert}
Consider an online marketplace for concert tickets. A buyer agent dynamically receives ticket offers, and is tasked by its human principal with the goal of selecting the best offer (that is, the ticket with the lowest price) for a given concert (indicated by constant $\myc$ below). Every ticket has two properties: its price, and the concert it refers to. 

Formally, this setting is captured by three datatypes $\ttype$, $\ctype$ (with $\myc \inn \ctype$), and the real numbers $\mathbb{R}$, a dedicated constant $\undef$ of type $\ttype$ and  functions
$\pfun\colon\ttype\to\mathbb{R}$ 
and $\cfun\colon\ttype\to \ctype$
that resp.~assign to each ticket its price and the concert it is for. Such two functions conceptually model a database table with three columns, where each record stores a ticket id (primary key) with its price and concert.
The agent should select the cheapest ticket. An execution consists of a trace where at each time step the values of two variables of type $\ttype$ are specified:
$\tvar$, the ticket currently offered by the marketplace; and
$\bvar$, the ticket currently bookmarked by the agent. 

Since the agent specification is not accessible, its human principal wants to formalise the expected behavior and check whether the agent indeed adheres to it. To this end, the principal constructs a suitable \LTLfMT formula composed of different parts. The first part $\psi_{change}$ captures the situation where the bookmarked ticket must be changed to the currently received one, because it has a lower price.
The notation $\prev \bvar$ is used here to refer to the previous value of variable $\bvar$, i.e., the value $\bvar$ held in the instant before the current one. With this notation at hand, we set $\psi_{change}$ as

\noindent
\begin{footnotesize}
$\cfun(\tvar)\,{=}\,\myc \land (\prev{\bvar}\,{=}\,\undef \lor \pfun(\tvar) < \pfun(\prev{\bvar})) \to \bvar\,{=}\,\tvar$
\end{footnotesize}

\noindent
Next, the formula $\psi_{keep}$ deals with the case where the bookmarked ticket is kept, since the currently received ticket either refers to a concert of no interest, or has a higher price than the bookmarked one. Formally, $\psi_{keep}$ is given by

\noindent
\begin{footnotesize}
$(\cfun(\tvar)\,{\neq}\,\myc \lor (\prev\bvar\,{\neq}\,\undef \land \pfun(\tvar){>}\pfun(\prev{\bvar}))) \to \bvar{=}\prev{\bvar} 
$
\end{footnotesize}

To bundle everything together, we assume for simplicity that the ticket stream starts at the second instant. We can hence require that $\bvar$ is initially $\undef$, while the formulae $\psi_{keep}$ and $\psi_{change}$ must hold in every instant of the trace, starting from the next one.\footnote{We assume for simplicity that if a ticket with the same price appears, the agent keeps the one already bookmarked.
}
The latter condition is captured using the \ltlf operators $\X$ and $\G$, yielding
\[\psi := (\bvar\,{=}\,\undef) \wedge \X \G (\psi_{change} \wedge \psi_{keep})\]
\end{example}

Differently from the simpler settings of propositional \ltlf \cite{DDMM22}, or of \ltlf extended with numerical variables \cite{FMPW23}, where a trace is simply a sequence of assignments, dealing with \LTLfMT also requires to keep track of a first-order model $M$. In Ex.~\ref{ex:concert}, this corresponds to the set of tickets seen in the trace, as well as their prices and concerts. 
Therefore, different interpretations of the monitoring problem are conceivable: the model underlying a trace can be considered as fixed, or it may be assumed to change in every step.
In this work, we adopt the following:  
given a trace $\tau$ with model $M$, a continuation of $\tau$ provides an \emph{extension of $M$} -- a notion that we later anchor precisely to the  model-theoretic concept of \emph{embeddings}. This interpretation has the following advantages:
\begin{inparaenum}[(1)]
\item the current model, dealing with the trace seen so far, is extended only conservatively, in that facts concerning the previous elements must be kept throughout the trace, and no new fact concerning those elements can be added, but
\item in continuations of the trace, new elements may appear, along with additional facts involving these.
\end{inparaenum}
This is fully in par with the expected interpretation of settings like that of Ex.~\ref{ex:concert}, where the prices and referred concerts of tickets seen in the past are fixed, while new information on unseen tickets can be acquired later.

\citeauthor{LeuS09} (\citeyear{LeuS09}) also emphasize another central monitoring desideratum, namely the identification of violations in the earliest possible instant, which in turn calls for \emph{anticipatory monitors} \cite{BartocciFFR18}. These monitors return their verdict not only considering the prefix seen so far, but also by reasoning on all its (infinitely many) possible suffixes. The next example makes a clear case for this.

\begin{example}
\label{ex:concert-monitor}
Consider formula $\psi$ from Ex.~\ref{ex:concert}.
Monitoring $\psi$ essentially amounts to check, at every instant, whether the expected relationship between $\bvar$ and $\tvar$ holds. Consider now
\[
\psi_{\mathtt{t123}} := \psi \land \F\G(\bvar = \mathtt{t123} 
)
\]
It expresses that at some point the agent selection stabilizes\footnote{This is captured by the \ltlf operators $\F$ and $\G$, where $\F\G\gamma$ models that there exists an instant from which $\gamma$ always holds; it is known that, over finite traces, this is logically equivalent to $\gamma$ holding in the last instant of the trace \cite{DeGV13}.} 
to the ticket identified by $\mathtt{t123}$. 
A monitor checking only the trace prefix seen so far would detect a violation of $\psi_{\mathtt{t123}}$ only if $\psi$ is violated, or at the end of the trace if the selected ticket is not $\mathtt{t123}$.
However, note that the price of ticket $\mathtt{t123}$ must be known, say $price(\mathtt{t123})=100$, and due to the behaviour of the agent, it is clear that $\mathtt{t123}$ will never be selected as soon as the agent finds a ticket with a price below 100 euros. In this respect, an anticipatory monitor for $\psi_{\mathtt{t123}}$ would immediately recognize  that $\psi_{\mathtt{t123}}$ is permanently violated (i.e., it will stay so independently on how the trace evolves) if in the current instant
$\bvar$ points to a ticket with a price $<100$. 
\end{example}

In this paper, we tackle for the first time anticipatory monitoring described in Ex.~\ref{ex:concert-monitor}. In fact, \emph{anticipatory monitoring for \LTLfMT is a completely open problem}. It is indeed a challenging one, as already in the case where \ltlf is equipped with numeric datatypes with simple forms of arithmetic, it incurs in undecidability~\cite{FMPW23}, and therefore relevant monitorable fragments of the logic have to be singled out. We bring a fourfold contribution:
\begin{inparaenum}[(1)]
\item We formally introduce the anticipatory monitoring problem for \LTLfMT specifications.
\item We focus on the quantifier-free fragment of \LTLfMT, an expressive logic called \qfLTLfMT that has been previously studied for process analysis \cite{GianolaMW25KI} and verification \cite{GianolaMW24} tasks, but never for monitoring. We provide a monitoring technique that combines an automata-theoretic component to deal with the temporal dimension, with an SMT component to tackle the data part.
\item We identify expressive classes of properties combining arithmetic with references to a database, in the style of Ex.~\ref{ex:concert}, where we show that our monitoring task becomes effectively solvable. 
\item
Feasibility of our method is witnessed by a prototype implementation with preliminary evaluation.
\end{inparaenum}

\paragraph{Related work}

A rich body of work studies RV of arithmetic temporal properties, notably Lola~\cite{DAngeloSSRFSMM05} (which also supports anticipation~\cite{HiplerKLS24}), TeSSLa~\cite{KallwiesLSSTW22}, and Eagle~\cite{GoldbergH05}. From \cite{FMPW23}, which considers anticipatory monitoring for \LTLf with arithmetic. We borrow in the present work the idea to abstract variable configurations by sets of formulae arranged in a graph, and the resulting approach to identify decidable fragments. However, dealing with richer theories substantially changes both the setting and the techniques needed to construct monitors. 

\citeauthor{DeLT16} (\citeyear{DeLT16}) propose an approach for monitoring modulo theories, but use an incomparable logic where variables cannot be compared across states, and models change at each trace instant. In Ex.~\ref{ex:concert}, this would essentially mean that the price and referred concert of already seen tickets non-deterministically change at every instant.
Also, no decidable fragments are identified. 
%
In another line of work~\cite{SchneiderBBKT21,BasinKMZ15,LimaMT24}, monitoring with \emph{metric} first-order temporal logic is considered, but in the non-anticipatory setting. 
\citeauthor{HavelundPU20} (\citeyear{HavelundPU20}) present a monitoring approach for first-order past time temporal logic without anticipation. This work shares with the present paper that infinitely many assignments are represented by logical formulas, in this case by BDDs. However, in both \cite{HavelundPU20,BasinKMZ15} the model is supposed to change at every instant, as predicates model events rather than facts.

The logic \qfLTLfMT we consider here has previously been studied for process analysis and verification. Specifically, \cite{GianolaMW24} verifies data-aware dynamic systems against \qfLTLfMT properties, using automata-based techniques that are however too weak to be ported to monitoring.

\section{Preliminaries}
\label{sec:preliminaries}

\paragraph{\LTLfMT syntax.}
We consider a first-order multi-sorted \emph{signature} $\Sigma=\langle\SS, \PP, \FF, V, U\rangle$, where 
$\SS$ is a set of sorts;
$\PP$ is a set of predicate and $\FF$ a set of function symbols over $\SS$;
$V$ is a set of \emph{data variables}; and
$U$ is a set of variables disjoint from $V$ that will be used for quantification; where all variables have a sort in $\SS$.
below, we consider a fixed signature $\Sigma$ that contains equality predicates for all sorts in $\SS$.

A term $t$ is generated by the following grammar:
\[
t := x \mid y \mid f (t_1, \dots , t_k) \mid \prev x
\]
where $x \in V$ and $y \in U$ are variables, $f \in \FF$ is a function symbol of arity $k$, each $t_i$, $1 \leq i \leq k$, is a term, and $\prev{}$ is the \emph{previous} constructor. 
The grammar of \LTLfMT formulas over $\Sigma$ is the following:
\begin{align*}
\alpha &:= p(t_1, \dots, t_k) \\
\phi &:= \alpha \mid \neg \alpha \mid \phi \vee \phi' \mid \phi \wedge \phi' \mid \exists y.\phi \mid \forall y.\phi \\
\psi &:= \top \mid \phi \mid \psi\,{\vee}\,\psi' \mid \psi \,{\wedge}\,\psi' \mid \X \psi \mid \wX \psi \mid \psi\,{\U}\,\psi' \mid \psi\,{\R}\,\psi'
\end{align*}
%
where $y \in U$, $p \in \PP$ is an $k$-ary predicate symbol, each
$t_i$ is a term, and $\X$, $\wX$, $\U$, and $\R$ are the next, weak next, until, and release temporal operators, respectively.
Formulas of type $\phi$ as defined above are called first-order (FO) formulas.
In the definition of $\psi$, we force $\phi$ to have no free variables from $U$.  We use the usual shortcuts $\F \psi \equiv \top \U \psi$ and $\G \psi \equiv \bot \R \psi$. 
By the grammar above, \LTLfMT formulas are always in negation
normal form, and we assume them to be well-typed with regard to the sorts of all the involved symbols.
The set of all \LTLfMT formulas is denoted $\LL$.

For instance, $\psi$ and $\psi_{\mathtt{t123}}$ in Ex.~\ref{ex:concert} are \LTLfMT formulas over the signature where $V = \{\tvar, \bvar\}$, there are no quantified variables so $U$ can be chosen empty, $\PP$ contains the arithmetic predicates, and $\FF$ consists of  arithmetic functions, equality, $\mathit{price}$, $\mathit{concert}$, $\mathtt{undef}$, $\mathtt{myc}$, and $\mathtt{t123}$. 

A FO formula $\phi$ is a \emph{state formula} if it does neither contain free variables from $U$ nor $\prev{}$ operators.
A state formula without free variables is a \emph{sentence}, and a
set of sentences is a 
\emph{$\Sigma$-theory} $\TT$. It is \emph{universal} if all its sentences are over $\Sigma$ and have the form $\forall u_1, \dots, u_l.\, \psi$ with $\psi$ quantifier-free.

\paragraph{\LTLfMT semantics.}
To define semantics, we use the standard notion of a \emph{$\Sigma$-structure} $M$,
which associates each sort $s\in \SS$ with a domain $s^M$, and each predicate $p\in \PP$ and function symbol $f\in \FF$ with a suitable interpretation $p^M$ and $f^M$. The equality predicates have the standard interpretation given by the identity relation.
The carrier of $M$, i.e., the union of all domains of sorts in $\SS$, is denoted by $\Mcarrier$.
%
A total function $\alpha\colon V \to \Mcarrier$ is called a \emph{state variable assignment}.
We always assume that variables are mapped to an element of their respective domain. 
A \emph{trace} $\tau=\trace$ consists of a model $M$ and a sequence of state variable assignments $\alpha_i$ with codomain $\Mcarrier$.
We call $n$ the \emph{length} of $\tau$, denoted $|\tau|$.
For two traces $\tau=\trace[n]$ and $\tau'=(M',\langle\alpha_0', \dots,\alpha_m'\rangle)$ with $M=M'$, their concatenation is given by $\tau \cdot \tau':=(M,\langle\alpha_0, \dots, \alpha_n,\alpha_{0}', \dots, \alpha_m'\rangle)$.

Given a trace $\tau=\trace$, 
and a variable evaluation function
$\xi \colon U \to \Mcarrier$ for the variables in $U$,
the \emph{evaluation} $\eval[i]{t}$ of a term $t$ is 
\emph{well-defined} if $i < |\tau|$, and either $i>0$ or $t$ does not contain $\prev{}$ operators, namely as\\
$\begin{array}{@{}r@{}lr@{}l@{}}
\eval{v} &= \alpha_i(v)&
\eval{\prev v} &= \alpha_{i-1}(v) \\
\eval{u} &= \xi(u) &
\eval{f(t_1,\dots, t_k)} &= f^{M}(\eval{t_1}, \dots, \eval{t_k})
\end{array}$

where $v \in V$ and $u\in U$.
Satisfaction of a first-order formula $\phi$ with respect to an environment $\xi$ in the run $\tau$ with $i < |\tau|$, denoted 
$\tau \models^i_\xi \phi$, is defined as:
\begin{alignat*}{3}
\tau \models^i_\xi & p(t_1, \dots, t_k) && \text{if $t_1, \dots, t_k$ are well-defined and} \\[-1ex]
&&&(\eval{t_1}, \dots ,\eval{t_k}) \in p^{M_i},\\[-0.5ex]
\tau \models^i_\xi & \neg p(t_1, \dots, t_k) \quad &&
\text{if }\tau \not\models^i_\xi p(t_1, \dots, t_k)\\
\tau \models^i_\xi & \phi_1 \land \phi_2 &&
\text{if }\tau \models^i_\xi \phi_1 \text{ and }\tau \models^i_\xi \phi_2\\
\tau \models^i_\xi & \phi_1 \lor \phi_2 &&
\text{if }\tau \models^i_\xi \phi_1 \text{ or }\tau \models^i_\xi \phi_2\\
\tau \models^i_\xi & \exists y.\,\phi &&
\text{if }\tau \models^i_{\xi[y\mapsto e]} \phi
\text{ for some } e\in \Mcarrier\\
\tau \models^i_\xi & \forall y.\,\phi &&
\text{if }\tau \models^i_{\xi[y\mapsto e]} \phi
\text{ for all } e\in \Mcarrier
\end{alignat*}
where $y$ is assumed to have sort $s$, and $e$ is in the domain for $s$ in $\Mcarrier$. 
Satisfaction w.r.t.~$\tau$ is extended to a general \LTLfMT formula $\psi$ as follows:
\[
\begin{array}{@{}r@{\,}ll@{}}
\tau \models^i & \phi &
\text{if } \tau \models^i_\emptyset \phi \\
\tau \models^i & \psi_1 \land \psi_2 &
\text{if }\tau \models^i \psi_1 \text{ and }\tau \models^i \psi_2\\
\tau \models^i & \psi_1 \lor \psi_2 &
\text{if }\tau \models^i \psi_1 \text{ or }\tau \models^i \psi_2\\
\tau \models^i & \X \psi &
\text{if $i<|\tau|{-}1$ and }\tau \models^{i+1} \psi \\
\tau \models^i & \wX \psi &
\text{if $i\geq|\tau|{-}1$ or }\tau \models^{i+1} \psi \\
\tau \models^i & \psi_1 \U \psi_2 &
\text{if there is some $j$, $i \leq j<|\tau|$ such that } \\
&& \tau \models^{j} \psi_2\text{ and }\tau \models^{k} \psi_1 \text{ for all }i \leq k <j \\
\tau \models^i & \psi_1 \R \psi_2 &
\text{if either }\tau \models^{j} \psi_2 \text{ for all }i \leq j<|\tau|\text{, or there is}\\
&&\text{some $j$, $i \leq j<|\tau|$ such that }\tau \models^{j} \psi_1 \\
&& \text{and }\tau \models^{k} \psi_2 \text{ for all }i \leq k \leq j.
\end{array}
\]
Finally, $\tau$ \emph{satisfies} $\psi$, denoted by $\tau \models \psi$,  if
$\tau \models^0 \psi$ holds.

\newcommand{\cnull}{\mathit{null}}
\begin{example}
\label{ex:concert:2}
For $\phi$ from Ex.~\ref{ex:concert},
consider a model $M$ where $\mathit{Concert}^M=\{c_0, c_1, c_2, c_3\}$,
 $\mathtt{myc}^M=c_1$, 
 $\mathit{Ticket}$ has carrier $\{\bot, t_1, t_2, \dots,t_{1000}\}$,
 $\mathtt{undef}^M =\bot$, 
 with an interpretation that satisfies 
$\cfun(t_2) = \cfun(t_4) = \cfun(t_5) = c_1$ and $\cfun(t_1) = c_2$, $\pfun(t_2)=100$, and $\pfun(t_4)=110$, $\pfun(t_5)=95$. 
Then the trace $(M, \overline \alpha)$ satisfies $\phi$, where
\[
\overline \alpha = \langle 
\domino{\bot}{\bot},
\domino{t_2}{100},
\domino{t_1}{100},
\domino{t_4}{100},
\domino{t_5}{95}
\rangle
\]
\end{example}

Our logic differs from~\cite{GGG22} in two respects: we define \LTLfMT with a \emph{lookback} operator $\prev{}$ instead of \emph{lookahead}, i.e., the logic allows to refer to values of $V$ in the previous trace instant, not in the next. However, every formula with lookahead can be equivalently expressed with lookback~\cite[Lem. 37]{FMPW23}.
Moreover, we have the strong version of the cross-state operator (but not its weaker version), so that atoms with not well-defined terms are false. 

\paragraph{First-order logic.}
Next, we need to repeat some preliminaries from FO logic and model theory.
If $\phi$ is a state formula, we simply write $M,\alpha \models \phi$ instead of $\langle(M,\alpha)\rangle \models_\emptyset^i \phi$, and
if $\phi$ is a sentence, we write
$M \models \phi$ for
$M,\emptyset \models_\emptyset^0 \phi$.
We adopt a common SMT perspective and view a $\Sigma$-theory $\TT$ as the set of all $\Sigma$-structures that satisfy all axioms of $\TT$, and write
$M \in \TT$ if $M$ is a model of $\TT$, i.e.,
$M\models \phi$ holds for all sentences 
$\phi$ in $\TT$.
A state formula $\phi$ is \emph{$\TT$-satisfiable} if there is some $M\in \TT$ and state variable assignment $\alpha\colon V \to \Mcarrier$ such that $M,\alpha \models \phi$;
whereas $\phi$ is $\TT$-valid if $M,\alpha \models \phi$ for all $M\in \TT$ and $\alpha$.
Moreover, formulas $\phi_1$ and $\phi_2$ are \emph{$\TT$-equivalent}, denoted
$\phi_1 \equiv_\TT \phi_2$, if $\phi_1 \leftrightarrow \phi_2$ is $\TT$-valid.

A $\Sigma$-theory $\TT$ has \emph{quantifier elimination} (QE) if for every $\Sigma$-formula $\phi$
 there is a quantifier-free formula $\phi'$ that is $\TT$-equivalent to $\phi$.
As QE is a strong requirement,
we exploit in this paper the weaker notion of 
a theory having a model completion.
%
In this context, an \emph{embedding} is a mapping $\mu \colon M \to M'$ between two $\Sigma$-structures $M$ and $M'$ that preserves the truth of $\Sigma$-literals.
It is easy to see that a composition of embeddings is again an embedding.
%
A universal $\Sigma$-theory $\TT$ has model completion~\cite{Ghilardi04} iff there exists a $\Sigma$-theory $\TT^*$, i.e., a theory over the same signature, where $\TT^* \supseteq \TT$, such that
$(i)$ every model of $\TT$ can be embedded in a model of $\TT^*$, and
$(ii)$ $\TT^*$ has QE. 

It follows from this definition that every model of $\TT^*$ can be embedded in a model of $\TT$. 
We will later exploit the following properties \cite[Sec. 2]{CalvaneseGGMR20}:
\begin{lemma}\label{lem:embedding}
  Let $\TT$ have model completion $\TT^*$.
\begin{compactenum}[(1)]
  \item
    Let $M\in \TT$, $\varphi$ be an existential $\Sigma$-formula and $\alpha$ an assignment of values in $M$ such that $M,\alpha \models \varphi$. Then there is an embedding $\mu \colon M \to M'$ for some $M' \in \TT^*$ such that $M', \mu \circ \alpha \models \varphi$.
\item
    Let $M\in \TT$, $M' \in \TT^*$, $\mu\colon M \to M'$ be an embedding,
     $\varphi$ a quantifier-free $\Sigma$-formula, and $\alpha$ be an assignment of values in $M$.  
  If $M',\mu \circ \alpha \models \varphi$ then $M,\alpha \models \varphi$.
\end{compactenum}
\end{lemma}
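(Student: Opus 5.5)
For (1), the plan is a diagram (Robinson-style) construction. Write $\varphi$ in prenex form as $\exists \vec y.\,\chi$ with $\chi$ quantifier-free. Since $M,\alpha\models\varphi$, we can fix witnesses $\vec e$ in $\Mcarrier$ with $M,\alpha[\vec y\mapsto \vec e]\models\chi$. By property $(i)$ of a model completion, $M\in\TT$ embeds via some $\mu\colon M\to M'$ into a model $M'\in\TT^*$. Embeddings preserve the truth of literals, and hence of every quantifier-free formula; this follows by an easy induction on $\chi$, as $\chi$ is built from literals by $\wedge,\vee$ and the formula grammar keeps negation in front of atoms. Therefore $M',\mu\circ\alpha[\vec y\mapsto\mu(\vec e)]\models\chi$. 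This gives $M',\mu\circ\alpha\models\exists\vec y.\,\chi$, i.e.\ $M',\mu\circ\alpha\models\varphi$. If $\varphi$ is not literally prenex, the same argument runs by induction on its structure. Existential quantifiers are handled by taking images of witnesses, and disjunctions and conjunctions are handled directly. Note that, for this part, only the embeddability clause $(i)$ is used; QE of $\TT^*$ plays no role.

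For (2), the plan is to reflect truth backwards along $\mu$. The key observation is that for a quantifier-free $\varphi$ and an embedding $\mu$, we have $M,\alpha\models\varphi$ iff $M',\mu\circ\alpha\models\varphi$. We prove this by induction on $\varphi$. For an atom $p(t_1,\dots,t_k)$, first show by induction on terms that $\mu$ commutes with term evaluation, i.e.\ $\mu([t]^M_\alpha)=[t]^{M'}_{\mu\circ\alpha}$, because embeddings are homomorphisms. Then use that embeddings preserve both positive and negative literals, so $(\ldots)\in p^M$ iff $(\mu\ldots)\in p^{M'}$; for equality this uses injectivity, which is encoded by the preservation of $\neq$-literals. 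The cases for $\wedge$ and $\vee$ are immediate. The direction needed in (2) then follows. Here the hypothesis $M'\in\TT^*$ is not even needed; it only matters for how the lemma is used later, where QE in $\TT^*$ turns arbitrary formulas into quantifier-free ones.

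The only delicate point I expect is making precise that ``preserves the truth of $\Sigma$-literals'' includes negative literals, so that truth of atoms is reflected and not merely preserved. I would state explicitly that a $\Sigma$-embedding is injective, commutes with function symbols, and satisfies $\vec e\in p^M \Leftrightarrow \mu(\vec e)\in p^{M'}$. Once this is fixed, both parts are routine inductions. They can also be cited from \cite[Sec. 2]{CalvaneseGGMR20}.
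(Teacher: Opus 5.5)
Your proposal is correct. The paper gives no proof of its own and simply cites \cite[Sec.~2]{CalvaneseGGMR20}; your argument is the standard reasoning behind that citation: clause $(i)$ of the model-completion definition plus upward preservation of existential formulas along embeddings for (1), and reflection of quantifier-free formulas along embeddings (via preservation of both positive and negative literals) for (2). Your side remarks are also accurate: QE of $\TT^*$ plays no role in (1), and (2) holds for any embedding, whether or not $M'\in\TT^*$.
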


For a trace $\tau = \trace$ and an embedding $\mu \colon M \to M'$, let
$\mu(\tau) = (M', \langle \mu \circ \alpha_0, \dots, \mu \circ \alpha_n \rangle)$.

We will sometimes refer to common SMT theories~\cite{BarrettSST21}: the theory of equality and uninterpreted functions  (EUF) for a given $\Sigma$, and linear arithmetic over rationals (LRA), integers (LIA), or both (LIRA). While the arithmetic theories have QE~\cite{Presburger29}, EUF admits model completion and so do certain 
\emph{tame} combinations of LIRA and EUF~\cite{IJCAR20,CalvaneseGGMR22}.
%
In the remainder, we make the following assumptions on the theory: 
\begin{assumptions}
\label{ass:T}
\begin{inparaenum}[(a)]
\item Satisfiability of existential $\TT$-formulas is decidable;
\item  $\TT$ is either universal and has a model completion $\TT^*$; or has QE, in this case let $\TT^*\,{:=}\,\TT$.
\end{inparaenum}
\end{assumptions}
We will show later that these assumptions are satisfied by several theories which are highly relevant in practice.

\paragraph{Monitoring.}
We now define our main task, namely determining how the satisfaction of a given property changes along a trace.
Throughout the paper, we consider \emph{anticipatory} monitoring, taking into account both the trace seen so far and all its finite future continuations.  
Following the literature~\cite{BaLS10}, we consider the set $RV = \{\PS, \CS, \CV, \PV\}$ of four distinct \emph{monitoring states}:
current satisfaction ($\CS$), permanent satisfaction ($\PS$), current violation ($\CV$) and permanent violation ($\PV$).
We call a trace $\tau'$ is an \emph{extension} of a trace $\tau=\trace$ if there is some $M'$ such that there is an embedding $\mu \colon M \to M'$ and assignments $\alpha_i'$ such that $\tau' = (M',\langle\mu(\alpha_0), \dots, \mu(\alpha_n),\alpha_{0}', \dots, \alpha_m'\rangle)$.

\begin{definition}
\label{def:monitoring}
An \LTLfMT property $\psi$ is in
monitoring state $s \in RV$ after a trace $\tau$, written $\tau \models \llbracket \psi = s\rrbracket$, if
\begin{compactitem}
\item
$s = \CS$, $\tau \models \psi$ but $\tau' \not\models \psi$ for some extension $\tau'$ of $\tau$;
\item
$s = \PS$, $\tau \models \psi$, and $\tau' \models \psi$ for every extension $\tau'$ of $\tau$;
\item
$s = \CV$, $\tau \not\models \psi$ but $\tau' \models \psi$ for some extension $\tau'$ of $\tau$;
\item
$s = \PV$, $\tau \not\models \psi$, and $\tau' \not\models \psi$ for every extension $\tau'$ of $\tau$.
\end{compactitem}
\end{definition}

After each trace, a property $\psi$ is in exactly one possible monitoring state.
E.g. for the trace $\tau$ from Ex.~\ref{ex:concert:2} and $\psi$, $\psi_{\mathtt{t123}}$ from Ex.~\ref{ex:concert} we have 
$\tau \models \llbracket \psi = \CS\rrbracket$: the property is currently satisfied but could be violated in an extension of $\tau$. Considering $\psi_{\mathtt{t123}}$, let $\mathtt{t123}^M=t_{123}$. If we have $\pfun(t_{123}) = 80$ then
$\tau \models \llbracket \psi_{\mathtt{t123}} = \CV\rrbracket$ as $t_{123}$ could still be selected in the future; but if 
$\pfun(t_{123}) = 100$ then $\tau \models \llbracket \psi_{\mathtt{t123}} = \PV\rrbracket$.

Given $\tau$ and $\psi$,
the \emph{monitoring problem}
is to compute the state $s\in RV$ s.t.  
$\tau \models \llbracket \psi = s\rrbracket$. It is \emph{solvable} if
one can construct a procedure for $\psi$ that computes the monitoring
state for any given trace.
It is known that the monitoring problem is not solvable if $\TT$ is the theory of linear arithmetic over the reals~\cite{FMPW23}, and hence also not for the much more general logic \LTLfMT.

\section{Automata for Properties}
\label{sec:automata}

In the remainder of the paper we assume a theory $\TT$ that satisfies Assumptions~\ref{ass:T}, and the quantifier-free fragment of \LTLfMT, that is, the logic \qfLTLfMT from \cite{GianolaMW25KI,GianolaMW24}. Excluding quantifiers is here an essential restriction when applying reasoning with model completion to the automaton, which is needed to construct the monitors. As e.g.~shown in Ex.~\ref{ex:concert}, \qfLTLfMT still supports expressive queries over the theories, provided that they are formulated referring to the monitored variables. 

It is well-known that \LTLf properties can be captured by non-deterministic automata (NFA)~\cite{dGV13}. Our monitoring approach relies on such an NFA for a given \LTLfMT property $\psi$. While the construction is rather standard, the correctness proofs are more complicated due to the FO setting.
We only sketch here our construction, details and proofs can be found in the appendix.
Let $\lambda$ be an additional proposition to mark the last element of a trace.
Given a \qfLTLfMT property $\psi$, let $C$ be the set of FO formulas in $\psi$ and $C^\pm = C \cup\{ \neg c \mid c\inn C\}$ the set of FO formulas in $\psi$ together with their negations.
The alphabet of the NFA will be $\smash{\Theta=2^{C^\pm \cup \{\lambda,\neg\lambda\}}}$, so that a symbol is a set of FO formulas. 

We build an NFA $\NFA[\psi]$ using an auxiliary function $\delta$:
The input of $\delta$ is a property $\psi \in \LL\cup \{\top,\bot\}$.
The output of $\delta$ is a set of pairs
$(\inquotes{\psi'},\varsigma)$ where $\psi' \in \LL\cup \{\top,\bot\}$ is again
a property and $\varsigma \in \Theta$. 
%
%
Intuitively, $(\inquotes{\psi'},\varsigma)\in\delta(\psi)$ expresses that 
a word $\varsigma w$ satisfies $\psi$ iff the suffix $w$ satisfies $\psi'$, for its definition see the appendix.
%
%
%
The NFA $\NFA$ is then defined as follows:
\begin{definition}
\label{def:NFA}
Given a property $\psi \in \LL$, let
$\NFA\,{=}\,(Q, \Theta, \varrho, q_0, Q_F)$ where $q_0\,{=}\,\inquotes{\psi}$ is the initial state,
$Q_F = \{\inquotes{\top}, q_{+}\}$ is the subset of final states, 
 $q_{-}$ is an additional non-final state,
and
$Q$, $\varrho$ are the smallest sets such that $q_0, q_F, q_{+}, q_{-} \in Q$ and whenever 
$q\in Q\setminus\{q_{+}, q_{-}\}$ and $(q', \varsigma)\in \delta(q)$
then $q'\in Q$ and 
\begin{compactenum}[(i)]
\item if $\lambda \not\in \varsigma$ then
$(q, \varsigma, q') \in \varrho$, and
\item
whenever $\lambda \in \varsigma$, if $q' = \inquotes{\top}$ then
$(q, \varsigma, q_{+}) \in \varrho$, and if
$q' = \inquotes{\bot}$ then
$(q, \varsigma \setminus\{\lambda\}, q_{-}) \in \varrho$.
\end{compactenum}
\end{definition}

\begin{example}
Consider $\psi$ from Ex.~\ref{ex:concert}. 
We abbreviate
$A:= (b=\undef)$,
$B:= (\cfun(t) = \myc)$,
$C:= (\pfun(t) > \pfun(\prev b))$,
$D:= (b=t)$,
$E:= (b=\prev b)$, and 
$\phi' := \G (\phi_{change} \wedge \phi_{keep})$.
Then the NFA $\NFA$ is as follows:

\noindent
\resizebox{\columnwidth}{!}{
\begin{tikzpicture}[node distance=68mm]
 \node[state] (phi) {$\phi$};
 \node[state, right of=phi,  final] (acc) {acc};
 \node[state, below of=phi,yshift=40mm,] (rej) {rej};
 \node[state, right of =rej] (phi2) {$\phi'$};
\draw[edge] ($(phi) + (-.4,0)$) -- (phi);
\draw[edge] (phi) -- node[action, above] {$\{\lambda, A, \neg B, E\}, \{\lambda, A, B, D\}$} (acc);
\draw[edge] (phi) -- node[action, above, sloped] {$\{\neg\lambda, A, \neg B, E\}, \{\neg \lambda, A, B, D\}$} (phi2);
\draw[->] (phi2) to[loop right, looseness=8] node[action, below, xshift=10mm, yshift=-2mm] {$\begin{array}{@{}c@{}}\{\neg\lambda, \neg B, E\}, \{\neg \lambda, \neg A, B, \neg C, E\},\\\{\neg \lambda, \neg A, B, C, D\},\{\neg \lambda, A, B, D\}\end{array}$} (phi2);
\draw[edge] (phi2) -- node[action, right] {$\begin{array}{@{}c@{}}\{\lambda, \neg B, E\}, \\\{\lambda, \neg A, B, \neg C, E\},\\\{\lambda, \neg A, B, C, D\},\\\{\lambda, A, B, D\}\end{array}$} (acc);
\draw[edge] (phi2) -- node[action, below] {$\begin{array}{@{}c@{}}\{\neg B, \neg E\}, \{\neg A, B, \neg C, \neg E\},\\\{\neg A, B, C, \neg D\},\{A, B, \neg D\}\end{array}$} (rej);
\draw[edge] (phi) -- node[action, left] {$\begin{array}{@{}c@{}}\{A, \neg B, \neg E\}, \\\{\neg A\},\\\{A, B, \neg D\}\end{array}$} (rej);
\draw[->] (acc) to[loop right, looseness=8] node[action, right] {$\top$}  (acc);
\draw[->] (rej) to[loop left, looseness=8] node[action, right] {$\top$}  (rej);
\end{tikzpicture}
}
\end{example}

We denote by $\Vpre$ the set of variables
$\Vpre{=}\{\prev v \mid v\inn V\}$.
A word $w = \varsigma_0, \varsigma_1, \dots, \varsigma_{n-1}\in \Theta^+$ is \emph{well-formed} if 
$\varsigma_0$ contains no variable in $\Vpre$.
We next define the notion of \emph{consistency}, which serves to capture that a trace $\tau$ satisfies all formulas in a word $w\inn\Theta^+$ as above. 
To that end, for assignments $\alpha$ and $\alpha'$ with domain $V$, 
we define 
the assignment $\combine{\alpha}{\alpha'}$ with domain $\Vpre \cup V$ as 
$\combine{\alpha}{\alpha'}(\prev {v}) = \alpha(v)$ and 
$\combine{\alpha}{\alpha'}(v) = \alpha'(v)$, for all $v\in V$.
For simplicity, we write $M, \alpha \models \varsigma$ instead of $M, \alpha \models \bigwedge \varsigma$.

\begin{definition}
A well-formed word $\varsigma_0, \varsigma_1,\dots, \varsigma_{n-1} \in \Theta^+$ is \emph{consistent} with a trace $\trace$ if $M,\alpha_0 \models \varsigma_0$ and
$M,\combine{\alpha_{i-1}}{\alpha_{i}} \models \varsigma_i$
for all $0 < i < n$, and moreover $\lambda \in \varsigma_{n-1}$, and $\neg \lambda \in \varsigma_i$ for all $0 \leq i < n-1$.
\end{definition}

\noindent
To state the main result about the NFA $\NFA$ below, we need to restrict to properties that use lookback in a valid way, i.e., that do not enforce constraints with lookback variables in the first trace instant.
To that end, let an \qfLTLfMT property $\psi$ have \emph{safe lookback} if no edge from the initial state has a label that contains $\Vpre$. 
The following is the main result about $\NFA$:

\begin{restatable}{theorem}{thmNFA}
\label{thm:automaton:acceptance}
Given $\psi\inn\LL$ with safe lookback and trace $\tau$, 
\begin{inparaenum}[(1)]
\item
if $\mathcal{N}_{\psi}$ accepts a well-formed word $w\inn \Theta^+$ consistent with $\tau$, then $\tau \models \psi$, and
\item if $\tau \models \psi$ then there exists a well-formed word $w\inn \Theta^+$ consistent with $\tau$ and accepted by $\mathcal{N}_{\psi}$.
\end{inparaenum}
\end{restatable}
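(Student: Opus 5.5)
We prove both directions together by proving a stronger statement about the auxiliary function $\delta$, by induction on the length of the trace suffix still to be read. The function $\delta$ itself is only given in the appendix; we assume it is the standard progression-style transition function for \LTLf, with the base and temporal clauses shown below.

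The invariant we aim for is the following. For every property $\psi'$ reachable in $\NFA$ and every position $i$ of $\tau=\trace$, we have $\tau\models^i\psi'$ iff there exist symbols $\varsigma_i,\dots,\varsigma_{n-1}$ and a run of $\NFA$ from state $\psi'$ on them that ends in $Q_F$, such that:
\begin{compactitem}
\item $\lambda\in\varsigma_{n-1}$, and $\neg\lambda\in\varsigma_j$ for all $j<n-1$;
\item $M,\combine{\alpha_{j-1}}{\alpha_j}\models\varsigma_j$ for all $j\ge i$, where for $j=0$ we read this as $M,\alpha_0\models\varsigma_0$.
\end{compactitem}
Taking $i=0$ and $\psi'=\psi$ gives both parts of the theorem. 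The same equivalence must also be checked for $\top$ and $\bot$.

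The local lemma we need concerns one step of $\delta$. For each pair $(\psi'',\varsigma)\in\delta(\psi')$, and for every position $i$ with $M,\combine{\alpha_{i-1}}{\alpha_i}\models\varsigma$ and $\lambda\in\varsigma$ exactly when $i=n-1$, we need: $\tau\models^i\psi'$ iff $\psi''$ holds at $i+1$. When $i=n-1$, "$\psi''$ holds at $i+1$" means $\psi''=\top$. Conversely, whenever $\tau\models^i\psi'$, some pair in $\delta(\psi')$ must have its label satisfied at $i$ and its successor property true at $i+1$ (or equal to $\top$ at the last position).

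We prove this lemma by structural induction on $\psi'$, using the clauses of $\delta$:
\begin{compactitem}
\item For an FO formula $\phi\in C$, $\delta(\phi)$ yields $(\top,\{\phi\})$ and $(\bot,\{\neg\phi\})$. Here the FO semantics agrees with evaluation under $\combine{\alpha_{i-1}}{\alpha_i}$, because $\eval{\prev v}=\alpha_{i-1}(v)$.
\item Conjunction and disjunction combine successor pairs by unioning the labels and conjoining or disjoining the successor properties. A label containing both $c$ and $\neg c$ is simply never satisfied, so such inconsistent labels do no harm.
\item $\X$ and $\wX$ branch on $\neg\lambda$ versus $\lambda$.
\item $\U$ and $\R$ use the usual one-step unfoldings, for example $\psi_1\U\psi_2\equiv\psi_2\lor(\psi_1\land\X(\psi_1\U\psi_2))$. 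These are valid over finite traces, and the induction is well-founded on the remaining trace length.
\end{compactitem}
Given the lemma, the invariant follows by induction on $n-i$. The final-step rule (ii) of Definition~\ref{def:NFA} routes $\top$-successors to $q_+$ and $\bot$-successors to $q_-$, matching the requirement that $\psi''=\top$ at the last position.

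The main obstacle is the first position and the lookback terms. At $i=0$ the term $\prev v$ is undefined, so an atom containing it is false while its negation is true, whereas the label $\varsigma_0$ can only be evaluated under $\alpha_0$. The safe lookback hypothesis is what resolves this. It guarantees that every label on an edge from $q_0$ avoids $\Vpre$, so consistency at position $0$ only needs $M,\alpha_0\models\varsigma_0$, and every word accepted by such a run is automatically well-formed. The step lemma at $i=0$ therefore only ever involves state formulas, which are evaluated correctly under $\alpha_0$.

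Beyond position $0$ all terms are well-defined, and the strong cross-state semantics coincides with evaluation under $\combine{\alpha_{i-1}}{\alpha_i}$. For part (2) we must also check that the chosen labels can be taken from $\Theta$, i.e.\ as subsets of $C^\pm\cup\{\lambda,\neg\lambda\}$. This holds because $\delta$ only ever produces such labels. No model-theoretic reasoning from Lemma~\ref{lem:embedding} is needed, since $M$ stays fixed throughout.
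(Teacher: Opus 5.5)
Your architecture matches the paper's: a one-step lemma for $\delta$ by structural induction (Lemma~\ref{lem:delta}), lifted to words by induction on $n-i$ (Lemma~\ref{lem:deltastar}), then read off through Definition~\ref{def:NFA}, with safe lookback handling instant $0$. Your per-pair equivalence is slightly stronger than the paper's existential formulation, and it does hold. The gap is the existence (``conversely'') direction, which your structural induction cannot deliver. Take $\psi_1\vee\psi_2$ with only $\tau\models^i\psi_1$. Every pair of $\delta(\psi_1\vee\psi_2)$ has a label $\varsigma_1\cup\varsigma_2$, so you also need a pair of $\delta(\psi_2)$ whose label holds at instant $i$ even though $\psi_2$ is false there. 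Your hypothesis only produces pairs for formulas that are true. The same problem arises for $\U$ and $\R$, whose $\delta$ is built from the same unions of labels. The paper closes this with a separate totality lemma (Lemma~\ref{lem:delta:total}): for every property and every instant (with safe lookback at $0$), some pair of $\delta$ has a label consistent with that instant. Combined with your per-pair equivalence, this gives existence at once. Proving totality also needs $M\in\mathcal T$. The combination of pairs discards every $\mathcal T$-unsatisfiable union (e.g.\ $\{x>0,x=0\}$ over LRA), not only unions containing some $c$ and $\neg c$, so your remark about inconsistent labels does not cover it.

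Second, your $\lambda$-conditions do not match what $\delta$ produces. The markers $\lambda,\neg\lambda$ enter labels only via $\X$ and $\wX$ (hence via $\U$, $\R$). Labels stemming from first-order parts carry none: $\delta(\phi)$ has labels $\{\phi\}$ and $\{\neg\phi\}$, and $\delta(\top)$ has $\emptyset$. For $\psi=\phi$, no edge of $\mathcal N_\psi$ mentions $\lambda$ at all. So your invariant (demanding $\lambda\in\varsigma_{n-1}$ and $\neg\lambda\in\varsigma_j$ for $j<n-1$) and your step hypothesis ``$\lambda\in\varsigma$ exactly when $i=n-1$'' make the existence direction fail already on a one-instant trace satisfying $\phi$. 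The paper's proof uses per-instant consistency instead, requiring only $\lambda\notin\varsigma$ for $i<n-1$ and $\neg\lambda\notin\varsigma$ for $i=n-1$. Under it, an accepting run may end with a rule-(i) transition into $\top\in Q_F$ rather than a rule-(ii) transition into $q_+$, and your final step must cover both cases. The main-text definition of consistency has the same mismatch, so the theorem has to be read with this relaxed notion.
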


\noindent
For $w = \varsigma_0, \dots, \varsigma_{n-1}$, we write $q_0 \goto{w} q_{n}$ if there is a sequence of states
$q_1, \dots, q_{n-1}$ such that $(q_{i}, \varsigma_i, q_{i+1}) \in \varrho$ for all $0 \leq i < n$.
We have the following determinism property:

\begin{lemma}
\label{lem:word:for:trace}
Given a trace $\tau$ there are a unique word $w\inn \Theta^*$ and state $q$ s.t.
$q_0 \goto{w} q$ in $\NFA$ and
$w$ is consistent with $\tau$.
\end{lemma}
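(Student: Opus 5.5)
The plan is to reduce the statement to a local determinism property of the transition function $\delta$ and then argue by induction along the trace. Concretely, I would first prove a lemma about $\delta$. For every property $q\in\LL\cup\{\top,\bot\}$, the labels $\varsigma$ occurring in pairs $(q',\varsigma)\in\delta(q)$ should satisfy three conditions:
\begin{compactenum}[(a)]
\item they are \emph{pairwise contradictory}: for two distinct labels $\varsigma\neq\varsigma'$ there is some $c\in C\cup\{\lambda\}$ with $c\in\varsigma$ and $\neg c\in\varsigma'$, or vice versa;
\item they are \emph{exhaustive}: every complete propositional valuation of $C\cup\{\lambda\}$ makes true at least one label;
\item each label determines its successor: $(q',\varsigma),(q'',\varsigma)\in\delta(q)$ implies $q'=q''$.
\end{compactenum}
This is proved by structural induction on $q$, following the definition of $\delta$ in the appendix. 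For an FO formula $\phi$, the labels are $\{\phi\}$ and $\{\neg\phi\}$, each combined with $\lambda$ or $\neg\lambda$. For $\X$, $\wX$ the labels split on $\lambda$ versus $\neg\lambda$. For $\wedge$, $\vee$, $\U$, $\R$, the labels of $\delta$ are obtained by pairwise unions of the labels of the subformulas. Pairwise unions of two exhaustive, pairwise contradictory families are again exhaustive and pairwise contradictory, after discarding unions that contain some $c$ together with $\neg c$. If a construction produces the same label twice, with different successors, I would merge the successors by the corresponding Boolean combination, so that (c) holds by construction. This step is where I expect the main work: one must check that $\delta$ is defined (or can be normalised) so that the three properties survive each case, in particular the fixpoint-style unfolding of $\U$ and $\R$.

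Given the lemma, fix $\tau=\trace$. For each $0\le i<n$, let $\theta_i$ be the complete valuation of $C\cup\{\lambda\}$ defined as follows. For $c\in C$, $\theta_i(c)$ is the truth value of $c$ under $M,\alpha_0$ when $i=0$, and under $M,\combine{\alpha_{i-1}}{\alpha_i}$ when $i>0$. Moreover, $\lambda$ is true in $\theta_i$ iff $i=n-1$. A letter $\varsigma_i$ is compatible with position $i$ of $\tau$, in the sense required by consistency, exactly when $\theta_i$ makes every element of $\varsigma_i$ true.

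I would then show by induction on $i$ that there is exactly one path prefix $q_0\goto{\varsigma_0\cdots\varsigma_{i-1}}q_i$ whose letters are compatible with positions $0,\dots,i-1$. The base case $i=0$ is trivial. In the step, exhaustiveness gives at least one label of $\delta(q_i)$ true under $\theta_i$. Pairwise contradiction gives at most one such label, and property (c) then makes the successor unique. Safe lookback ensures that the label chosen at $i=0$ contains no variable of $\Vpre$, so the word is well-formed and $\theta_0$ is meaningful.

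It remains to transfer this from $\delta$ to $\varrho$ in Definition~\ref{def:NFA}. For $i<n-1$ the compatible label contains $\neg\lambda$, so case (i) applies and the edge in $\varrho$ is the edge from $\delta$. At $i=n-1$ the label contains $\lambda$, and case (ii) redirects the edge to $q_+$ or $q_-$ depending on whether the $\delta$-successor is $\top$ or $\bot$. Here I would verify, via the lemma's construction, that at a $\lambda$-labelled step $\delta$ only yields $\top$ or $\bot$. Thus the final step is again unique. The states $q_+$ and $q_-$ have no outgoing $\varrho$-edges, so no extension of the path is possible.

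A technical point to handle carefully is the $q_-$ edges, whose label has $\lambda$ removed. For these I would read consistency at the last position modulo $\lambda$, as the statement allows $w\in\Theta^*$. Uniqueness still holds, because the two labels $\varsigma$ and $\varsigma\setminus\{\lambda\}$ arise from the same $\delta$-pair and hence the same target.
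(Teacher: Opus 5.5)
The paper states this lemma without proof, so your argument has to stand on its own. The strategy is the right one: show that the labels of each $\delta(q)$ are mutually exclusive, cover every realisable situation, and determine the successor, then follow the trace. However, two steps fail as written.

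\textbf{Exhaustiveness.} Property (b) is false. $\owedge$ and $\ovee$ discard every union that is not $\TT$-satisfiable, not only those containing some $c$ together with $\neg c$. For $\psi=(x>0)\wedge(x=3)$ over linear arithmetic, the label $\{\neg(x>0),x=3\}$ is dropped, and the valuation making $x>0$ false and $x=3$ true satisfies no remaining label. State (b) instead for valuations realised by some $M\in\TT$ and an assignment to $V\cup\Vpre$, plus a truth value for $\lambda$. The induction then goes through, because the union of two labels true in the same structure is $\TT$-satisfiable and is therefore kept. This fix has two consequences:
\begin{itemize}
\item The trace's model must be a $\TT$-model.
\item At $i=0$ you realise $\theta_0$ via, e.g., $\combine{\alpha_0}{\alpha_0}$, and use safe lookback to ignore the $\Vpre$ part. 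Safe lookback is genuinely needed even though the statement omits it: for $\psi=(x=\prev x)$ every edge leaving $q_0$ mentions $\Vpre$, so no well-formed consistent word exists.
\end{itemize}
On the other hand, (c) needs no merging, and merging would change the automaton the lemma is about. It follows from (a) within the same induction: two distinct pairs with equal union differ on one side, whose labels are contradictory, so the union contains some $c$ and $\neg c$ and was discarded. Nor is there a fixpoint issue for $\U$ and $\R$, since $\delta(\sX(\psi_1\U\psi_2))$ is given directly by the $\X$ clause.

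\textbf{Transfer to $\varrho$.} Your base case is misdescribed: $\delta(\phi)$, $\delta(\top)$ and $\delta(\bot)$ have $\lambda$-free labels. So in states that are Boolean combinations of FO formulas (in particular $\top$ and $\bot$), all edges are case (i) edges, also at position $n-1$. The path then ends in $\top$ or $\bot$, not in $q_\pm$.

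More importantly, a $q_-$ edge is labelled $\varsigma\setminus\{\lambda\}$, which contains neither $\lambda$ nor $\neg\lambda$. Under your notion of compatibility it therefore also fits positions $i<n-1$. For $\psi=\X\chi$ and a trace of length $2$, both $(q_0,\{\neg\lambda\},\chi)$ and $(q_0,\emptyset,q_-)$ fit position $0$. So the compatible path prefix is not unique. Uniqueness holds only for words of full length $n$, because $q_\pm$ have no outgoing edges, and your induction must be phrased that way.

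Your pending check that $\lambda$-labelled $\delta$-steps lead only to $\top$ or $\bot$ does go through. Every $\X$- or $\wX$-component, including those produced by unfolding $\U$ and $\R$, must then take its $\lambda$-branch, so the successor is a Boolean combination of $\top$ and $\bot$.

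\textbf{Which consistency notion.} Your compatibility notion is precisely the per-instant consistency of the appendix. Adopt it for the whole word, not only ``modulo $\lambda$'' at the last position. Under the main-text definition, which demands $\lambda\in\varsigma_{n-1}$ and $\neg\lambda\in\varsigma_i$ for $i<n-1$, the lemma already fails for a single FO formula $\psi=\phi$: its edges $\{\phi\}$, $\{\neg\phi\}$ and $\emptyset$ never carry $\lambda$.
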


Note that while it is possible to postprocess $\NFA$ to an NFA for $\psi$ that does not mention $\lambda$~\cite{FMPW23}, the resulting automaton requires an explicit determinization step to obtain a property akin to Lem.~\ref{lem:word:for:trace}, so we prefer to keep the $\lambda$'s.


\section{Monitoring Technique}

We next present our monitoring algorithm. 
Throughout this section, we assume a \qfLTLfMT property $\psi$ with NFA $\NFA\,{=}\,(Q, \Theta, \varrho, q_0, Q_F)$ as defined in the previous section.

For a formula $\varphi$ with free variables $V$, let $\varphi(\vec V/\vec X)$ denote the formula
where $\vec V$ is replaced by $\vec X$ (we denote by $\vec V$ a fixed ordering of $V$ as a vector).
For a set of constraints $C$ with free variables $\Vpre\cup V$, 
let $C(\vec X/\vec \Vpre, \vec Y/\vec V)$
denote the constraints where $\vec \Vpre$ is replaced by $\vec X$ and $\vec V$ by $\vec Y$.
Moreover, for a set of constraints $C \in \Theta$, we denote by $\nolambda C$ its conjunction where $\lambda$s are omitted, i.e., $\nolambda C = \bigwedge C \setminus \{\lambda, \neg \lambda\}$; so $\nolambda C$ is a formula with free variables $\Vpre \cup V$.

\begin{definition}
\label{def:regress}
For a state formula $\varphi$
and $C \inn \Theta$,
let
$\regress(\varphi, C) =$ $\exists \vec Y. \nolambda C(\vec V/\vec \Vpre, \vec Y/\vec V)\wedge \varphi(\vec Y/\vec V)$, where $Y$ is a set of fresh variables of the same size and sorts as $V$.
\end{definition}

Note that since $\varphi$ has free variables $V$, also $\regress(\varphi, C)$ has free variables $V$.
The $\regress$ operation defined above allows us to ``propagate backwards'' the verification obligation expressed by $\varphi$ through the constraint $C$: in formal verification, this corresponds to the classic notion of \emph{preimage} when performing backward search in a transition system, cf. e.g. \cite{AbdullaCJT96}. 
We collect these verification obligations in a structure called coreachability graph, which is one of the technical novelties of this paper, defined next:

\begin{definition}\label{def:cg}
  Given NFA $\NFA$ for $\psi$,
the \emph{coreachability graph} $\CG^+(\NFA)$ for $\NFA$ is a tuple
$\langle S, \gamma\rangle$ where $S$ is a set of nodes of the form $(q,\varphi)$ for $q\in Q$ and $\varphi$ a quantifier-free $\Sigma$-formula with free variables $V$, and $\gamma \subseteq S \times \Theta \times S$ is a transition relation.
It is inductively defined as follows:
\begin{compactitem}
\item[$(i)$] for all final states $q\in Q_F$, a node $(q,\top)\in S$, and
\item[$(ii)$] if $s  \in S$ is of the form $s= (q,\varphi)$ and there is a transition $q' \goto{\varsigma} q$ in $\NFA$ such that
$\regress(\varphi, \varsigma)$ is $\TT$-satisfiable,
there is some node $s' = (q', \varphi')\in S$
such that $\varphi' \sim \regress(\varphi, \varsigma)$, and 
$s' \goto{\varsigma} s$ is in $\gamma$.
\end{compactitem}
\end{definition}

Note that we use model completion to obtain quantifier-free formulas, which is the main technical novelty of our approach; 
the fact that all formulas in CGs are quantifier-free allows us to identify decidable fragments in the next section. Note that as $\regress(\varphi, \varsigma)$ is existential, by Assumptions~\ref{ass:T} some quantifier-free $\chi$ s.t. $\chi \equiv_{\TT^*} \regress(\varphi, \varsigma)$ indeed exists, by applying quantifier elimination in $\TT^*$. 

Intuitively, if $\CG^+(\NFA)$ has a node $(q,\varphi)$ then this means that from NFA state $q$ a final state can be reached under the condition that the current variable assignment satisfies $\varphi$. It is called '\emph{co}reachability graph' because it is is constructed by \emph{regressing} from the final states of the system, rather than by \emph{forward exploration} from the initial states as customary in standard reachability graphs \cite{DeselE93,ChiolaDFH97}. 
The \emph{non-coreachability graph} $\CG^-(\NFA)$ is defined dually, but where in condition $(i)$ of Def.~\ref{def:cg} we have $(q,\top)\in S$ for all $q\in Q \setminus Q_F$.

The construction in Def.~\ref{def:cg} need not terminate as infinitely many formulas may occur, but we will show in the next section that termination is guaranteed for  several relevant classes of properties. 
For cases where $\CG^+(\NFA)$ and $\CG^-(\NFA)$ are finite, we define the following formulas which express, intuitively, under which conditions a final (resp. non-final) state is reachable from state $q$ in the NFA.
\begin{definition}
\label{def:satunsat}
For $\psi$ with NFA $\NFA$ and a state $q$ in $\NFA$, let\\[.5ex]
$\begin{array}{r@{\,}l}
\sat(q) =&  \bigvee \{\varphi \mid (q, \varphi)\text{ is a node in }\CG^+(\NFA)\}
\\
\unsat(q) =&  \bigvee \{\varphi \mid (q, \varphi)\text{ is a node in }\CG^-(\NFA)\}
\end{array}$
\end{definition}

This leads to the following monitoring procedure, where the trace $\tau$ is assumed to have positive length:
\begin{algorithmic}[1]
\Procedure{monitor}{$\psi$, $\tau$}
  \State compute $\NFA[\psi]$
  \State $q \gets$ state in $\NFA[\psi]$ such that $q_0\gotos{w} q$ for some well-
  
  formed $w\in \Theta^*$ consistent with $\tau$ \Comment{cf. Lem.~\ref{lem:word:for:trace}}
  \State let $M$ be the model of $\tau$ and $\alpha_n$ its last assignment
  \If{$q$ is accepting in $\NFA[\psi]$}
    \State \textbf{return} $\CS$ \textbf{if} $M,\alpha_n \models\unsat(q)$ \textbf{else} $\PS$
  \EndIf
  \State \textbf{else return} $\CV$ \textbf{if} $M,\alpha_n \models\sat(q)$ \textbf{else} $\PV$
\EndProcedure
\end{algorithmic}

Thanks to Thm.~\ref{thm:automaton:acceptance}, Line 5 is equivalent to checking whether the property $\psi$ holds on the prefix $\tau$. If so, the possible monitoring states are restricted to $\CS$ and $\PS$; therefore, in Line~7 it suffices to check whether the extension leads to a violation (i.e., $\unsat(q)$). 
If $\tau$ does not satisfy $\psi$,  a similar reasoning applies.

The NFA, CGs, and formulas $\sat$ and $\unsat$ should only be computed when required but can be cached for later use, to avoid re-computation. This is especially useful when monitoring multiple traces w.r.t. the same property.

\paragraph{Correctness.}
To prove correctness,
we need some additional notation. First of all, \emph{regression constraints}
are formulas obtained from nested $\regress$ operations:

\begin{definition}
\label{def:regression}
For $w\,{=}\,\varsigma_0,\dots, \varsigma_{n-1}$ in $\Theta^*$,
the \emph{regression constraint} $\precond(w)$ is given by
$\precond(w) = \top$ if $n= 0$, and if $n\,{>}\,0$ then
$\precond(w) = \regress(\precond(\langle\varsigma_1,\dots, \varsigma_{n-1}\rangle), \varsigma_{0})$.
\end{definition}

Crucial for our verification technique is the fact that there is a correspondence between satisfying assignments of $\precond(w)$, and traces that satisfy the verification obligations
given by the constraints in $w$, as stated next:

\begin{restatable}{lemma}{lemmaregression}
\label{lem:regression}
For $w\inn \Theta^*$, a model $M$ and assignment $\alpha$,
\begin{inparaenum}[(1)]
\item if $M,\alpha\,{\models}\,\precond(w)$
there is a trace $\tau$ with model $M$ s.t.
$\langle\emptyset\rangle\,{\cdot}\,w$ is consistent with $(M,\langle\alpha\rangle)\,{\cdot}\,\tau$, and 
\item  
if $\langle\emptyset\rangle\,{\cdot}\,w$ is consistent with some $(M,\langle\alpha\rangle)\,{\cdot}\,\tau$ then $M,\alpha \models \precond(w)$.
\end{inparaenum}
\end{restatable}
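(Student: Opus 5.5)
Both parts go by induction on the length $n$ of $w=\varsigma_0,\dots,\varsigma_{n-1}$, unfolding Def.~\ref{def:regression} one step at a time and using Def.~\ref{def:regress}. I read ``consistent'' here as the constraint part of the definition only: $M,\combine{\alpha_{i-1}}{\alpha_i}\models\nolambda{\varsigma_i}$ for consecutive assignments. The leading $\langle\emptyset\rangle$ imposes no constraint on $\alpha$, and the $\lambda$ bookkeeping is irrelevant for $\precond$, since $\nolambda{\cdot}$ drops it. The key observation is that $\precond(w)$ is, by construction, an existential formula
\[
\exists \vec Y_0\cdots\vec Y_{n-1}.\ \nolambda{\varsigma_0}(\vec V/\vec \Vpre,\vec Y_0/\vec V)\wedge\nolambda{\varsigma_1}(\vec Y_0/\vec\Vpre,\vec Y_1/\vec V)\wedge\dots\wedge\nolambda{\varsigma_{n-1}}(\vec Y_{n-2}/\vec\Vpre,\vec Y_{n-1}/\vec V).
\]
This can itself be proved by a small induction, pulling quantifiers out after renaming bound variables apart. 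Satisfaction is evaluated in $M$ itself, so no model completion is involved in this lemma.

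\textbf{Base case $n=0$.} Here $\precond(w)=\top$ and $\langle\emptyset\rangle$ is consistent with $(M,\langle\alpha\rangle)$ with $\tau$ empty. Both directions are trivial.

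\textbf{Part (1), inductive step.} Write $w=\varsigma_0 w'$ and suppose $M,\alpha\models\regress(\precond(w'),\varsigma_0)$. By Def.~\ref{def:regress} there is a tuple of elements $\vec e$ of $\Mcarrier$ for $\vec Y$ such that $\nolambda{\varsigma_0}$ holds under $\combine{\alpha}{\alpha_1}$, where $\alpha_1$ maps $\vec V$ to $\vec e$, and $M,\alpha_1\models\precond(w')$. The induction hypothesis applied to $\alpha_1$ gives a trace $\tau'$ with model $M$ such that $\langle\emptyset\rangle\cdot w'$ is consistent with $(M,\langle\alpha_1\rangle)\cdot\tau'$. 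Take $\tau=(M,\langle\alpha_1\rangle)\cdot\tau'$. Then the first step of $\langle\emptyset\rangle\cdot w$ is satisfied by $\combine{\alpha}{\alpha_1}$, and the remaining steps are exactly those from the hypothesis.

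\textbf{Part (2), inductive step.} Conversely, suppose $\langle\emptyset\rangle\cdot\varsigma_0 w'$ is consistent with $(M,\langle\alpha,\alpha_1,\dots\rangle)$. Then $\langle\emptyset\rangle\cdot w'$ is consistent with the trace starting at $\alpha_1$, since the first letter $\emptyset$ is trivially satisfied. By the induction hypothesis, $M,\alpha_1\models\precond(w')$. Moreover, $\combine{\alpha}{\alpha_1}\models\nolambda{\varsigma_0}$, so witnessing $\vec Y$ by $\alpha_1(\vec V)$ gives $M,\alpha\models\regress(\precond(w'),\varsigma_0)$.

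\textbf{Main obstacle.} The only delicate point is the substitution bookkeeping in $\regress$. One must check that replacing $\vec\Vpre$ by $\vec V$ and $\vec V$ by $\vec Y$, and then evaluating under $\alpha$ extended with $\vec Y\mapsto\alpha_1(\vec V)$, agrees with evaluating $\nolambda{\varsigma_0}$ under $\combine{\alpha}{\alpha_1}$. This is a standard substitution lemma, proved by structural induction on terms using the clauses for $\eval{v}$ and $\eval{\prev v}$.
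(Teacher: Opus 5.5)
Your proof is correct and follows the paper's argument essentially step for step: induction on the length of $w$, unfolding one $\regress$ step, taking the existential witness for $\vec Y$ as the next assignment $\alpha_1$ with $M,\combine{\alpha}{\alpha_1}\models\nolambda{\varsigma_0}$, and applying the induction hypothesis in both directions. Reading consistency without the $\lambda$-conditions is also what the paper's proof tacitly does; under the literal definition the letter $\emptyset$, which contains neither $\lambda$ nor $\neg\lambda$, could never be consistent. The flattened existential form of $\precond(w)$ is a harmless aside that your induction does not need.
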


There is a close relationship between paths in CGs and regression constraints, as stated in the next lemma.
For a word $w\,{=}\, \varsigma_1, \dots, \varsigma_n$, we write $(q, \varphi) \gotos{w} (q', \top)$
if there is a path
$(q, \varphi) \goto{\varsigma_1} \dots \dots \goto{\varsigma_n} (q', \top)$
in $\CG^+(\NFA)$ or $\CG^-(\NFA)$.

\begin{restatable}{lemma}{lemmacg}
\label{lem:cg}
Let $G$ be $\CG^+(\NFA)$ (resp. $\CG^-(\NFA)$).
\begin{compactenum}[(1)]
\item If $G$ has a path $(q, \varphi) \gotos{w} (q_f, \top)$ then
$q \gotos{w} q_f$ in $\NFA$ such that $q_f$ is a final (resp. non-final) state, $\varphi$ is $\TT$-satisfiable, and $ \varphi\sim \precond(w)$.
\item If $q \gotos{w} q_f$ in $\NFA$ such that $q_f$ is final (resp. non-final), and $\precond(w)$ is $\TT$-satisfiable, there is a path
$(q, \varphi) \gotos{w} (q_f, \top)$ in $G$ for some $\varphi$ such that $\varphi \sim \precond(w)$.
\end{compactenum}
\end{restatable}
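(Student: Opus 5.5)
Both parts go by induction on the length of $w$, following the inductive construction of $G$ in Def.~\ref{def:cg}. I argue for $G=\CG^+(\NFA)$; the case $\CG^-(\NFA)$ is identical with ``final'' replaced by ``non-final''. The key auxiliary fact is that regression respects $\TT^*$-equivalence: if $\varphi_1 \sim \varphi_2$ then $\regress(\varphi_1,\varsigma)\sim\regress(\varphi_2,\varsigma)$. This holds because $\regress$ only substitutes variables, conjoins a fixed formula $\nolambda\varsigma$, and existentially quantifies, and all of these operations preserve equivalence modulo $\TT^*$.

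For (1), the path $(q,\varphi)\gotos{w}(q_f,\top)$ is built one edge at a time. In the base case $w$ is empty, so the node is $(q_f,\top)$, created only by clause $(i)$. Hence $q_f$ is final, $\top$ is satisfiable, and $\varphi=\top=\precond(\epsilon)$. In the inductive step, write $w=\varsigma w'$ with first edge $(q,\varphi)\goto{\varsigma}(q',\varphi'')$, where the remaining path starts at $(q',\varphi'')$ and reads $w'$. By induction, $q'\gotos{w'}q_f$ with $q_f$ final and $\varphi''\sim\precond(w')$. Edges arise only through clause $(ii)$, so $q\goto{\varsigma}q'$ is a transition of $\NFA$, $\regress(\varphi'',\varsigma)$ is $\TT$-satisfiable, and $\varphi\sim\regress(\varphi'',\varsigma)$. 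By the auxiliary fact, $\varphi\sim\regress(\precond(w'),\varsigma)=\precond(w)$.

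It remains to show that $\varphi$ is $\TT$-satisfiable. Here I use the model-completion bridge. The formula $\regress(\varphi'',\varsigma)$ is existential and $\TT$-satisfiable, so Lemma~\ref{lem:embedding}(1) gives a model of $\TT^*$ satisfying it. Since $\varphi$ is $\TT^*$-equivalent to it, $\varphi$ holds in that model of $\TT^*$. The formula $\varphi$ is quantifier-free, and that model embeds into a model of $\TT$ (noted after the definition of model completion), so $\varphi$ holds in a model of $\TT$. In the case where $\TT$ has QE, $\TT^*=\TT$ and this step is immediate.

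For (2), I again induct on $w$. The base case is clause $(i)$. For $w=\varsigma w'$ with $q\goto{\varsigma}q'\gotos{w'}q_f$, first note that $\precond(w)$ being $\TT$-satisfiable implies that $\precond(w')$ is $\TT$-satisfiable. A witness for the existential in $\precond(w)$ yields, in the same model, values for $\vec Y$ satisfying $\precond(w')(\vec Y/\vec V)$. The induction hypothesis then gives a node $(q',\varphi'')$ with a path to $(q_f,\top)$ and $\varphi''\sim\precond(w')$. Next, $\regress(\varphi'',\varsigma)\sim\precond(w)$, and I must transfer $\TT$-satisfiability across this $\TT^*$-equivalence. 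Both formulas are existential. Take a $\TT$-model of $\precond(w)$ and apply Lemma~\ref{lem:embedding}(1) to move it into a $\TT^*$-model. In that model $\regress(\varphi'',\varsigma)$ holds, and it can be pulled back to a $\TT$-model because existential formulas are preserved when passing to a substructure that contains the witnesses, i.e.\ the embedding of a $\TT^*$-model into a $\TT$-model. Clause $(ii)$ then supplies the node $(q,\varphi)$ with $\varphi\sim\regress(\varphi'',\varsigma)\sim\precond(w)$ and the required edge.

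The main obstacle is exactly this repeated transfer of satisfiability between $\TT$ and $\TT^*$. I would isolate it as a small claim: an existential formula is $\TT$-satisfiable iff it is $\TT^*$-satisfiable. This follows from Lemma~\ref{lem:embedding}(1), the embeddability of $\TT^*$-models into $\TT$-models, and the upward preservation of existential formulas under embeddings. With this claim in hand, both inductions are routine.
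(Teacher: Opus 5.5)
Your proof is correct and follows the paper's own argument: induction on $w$ along the two clauses of Def.~\ref{def:cg}, using that $\regress$ respects $\sim$ and that an existential formula is $\TT$-satisfiable iff it is $\TT^*$-satisfiable. You justify this transfer explicitly via Lemma~\ref{lem:embedding}(1) and upward preservation, where the paper just cites Assumptions~\ref{ass:T}. One small caveat, which the paper shares: in the base case of (1) you assert that $(q_f,\top)$ can only come from clause $(i)$, but clause $(ii)$ can also produce a node whose formula is literally $\top$ at a state of the wrong kind (whenever a regression is $\TT^*$-equivalent to $\top$), so the lemma must be read with $(q_f,\top)$ a root node created by clause $(i)$.
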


\noindent
At this point we are ready to prove our main theorem, which strongly exploits the two properties of model completions: embeddability of $\TT$- models into $\TT^*$-models, and 
QE in $\TT^*$. The proof of the following lemma is in the appendix.

\begin{theorem}
\label{thm:soundness}
If \textsc{monitor}$(\psi, \tau)$ returns $s$ then $\tau\,{\models}\,\llbracket \psi{=}s \rrbracket$.
\end{theorem}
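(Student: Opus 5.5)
We fix $\tau = (M,\langle\alpha_0,\dots,\alpha_n\rangle)$ with $M\in\TT$ and let $q$ and $w$ be the state and word of Line 3, which are unique by Lem.~\ref{lem:word:for:trace}. We first settle the current verdict. If $q$ is accepting, then $\NFA$ accepts $w$ and Thm.~\ref{thm:automaton:acceptance}(1) gives $\tau\models\psi$. If $q$ is non-accepting and we had $\tau\models\psi$, Thm.~\ref{thm:automaton:acceptance}(2) would yield an accepted consistent word $w'$. Determinism (Lem.~\ref{lem:word:for:trace}) forces $w'=w$ with the same end state, a contradiction, so $\tau\not\models\psi$. It then remains to show, for accepting $q$, that $M,\alpha_n\models\unsat(q)$ iff some extension of $\tau$ violates $\psi$. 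Dually, for non-accepting $q$, we must show $M,\alpha_n\models\sat(q)$ iff some extension satisfies $\psi$. We treat the second case; the first is symmetric with $\CG^-$.

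\emph{($\Rightarrow$).} Suppose $M,\alpha_n\models\sat(q)$, so $M,\alpha_n\models\varphi$ for some node $(q,\varphi)$ of $\CG^+(\NFA)$. By Lem.~\ref{lem:cg}(1) there is a path $q\gotos{u}q_f$ with $q_f$ final and $\varphi\sim\precond(u)$. The formula $\varphi$ is quantifier-free and holds in $M\in\TT$. By Lemma~\ref{lem:embedding}(1), applied to the (trivially existential) $\varphi$, there is an embedding $\mu\colon M\to M'$ with $M'\in\TT^*$ and $M',\mu\circ\alpha_n\models\varphi$. Since $\varphi\equiv_{\TT^*}\precond(u)$, we get $M',\mu\circ\alpha_n\models\precond(u)$. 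Lem.~\ref{lem:regression}(1) then gives a continuation $\tau''$ over $M'$ such that $\langle\emptyset\rangle\cdot u$ is consistent with $(M',\langle\mu\circ\alpha_n\rangle)\cdot\tau''$. Consistency with $\mu(\tau)$ is preserved for the prefix word $w$, because embeddings preserve quantifier-free formulas. The $\lambda$-marks need adjusting: the last letter of $w$ carries $\lambda$, but we are continuing the trace. I would handle this via the structure of Lem.~\ref{lem:word:for:trace}. I expect the careful statement to be that $q$ is reached by the prefix with $\neg\lambda$ letters only, and that the final $\lambda$-step is reflected in the state choice. I anticipate this bookkeeping, namely aligning the $\lambda$-convention of $w$ with the concatenated word $w\cdot u$, as a fiddly point to verify against the NFA definition. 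The word $w\cdot u$ is then well-formed, consistent with the extension $\mu(\tau)\cdot\tau''$, and accepted. By Thm.~\ref{thm:automaton:acceptance}(1), this extension satisfies $\psi$. If $\TT^*\neq\TT$ we also need a model of $\TT$: we embed $M'$ into some $M''\in\TT$. Here I expect to argue instead that extensions are taken with models of $\TT^*$ or $\TT$ as the definition permits. Alternatively, we pull back, using that the relevant literals are quantifier-free and hence preserved under the further embedding.

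\emph{($\Leftarrow$).} Suppose some extension $\tau'=\mu(\tau)\cdot\tau''$ with $\mu\colon M\to M'$ satisfies $\psi$. By Thm.~\ref{thm:automaton:acceptance}(2), some accepted word is consistent with $\tau'$. By determinism, its prefix of length $n+1$ agrees with $w$ up to the $\lambda$ markers, so it passes through $q$ and continues along a suffix $u$ to a final state. Lem.~\ref{lem:regression}(2) then gives $M',\mu\circ\alpha_n\models\precond(u)$; in particular $\precond(u)$ is $\TT$-satisfiable. Lem.~\ref{lem:cg}(2) yields a node $(q,\varphi)$ with $\varphi\sim\precond(u)$. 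The key step, which I expect to be the main obstacle, is to transfer $\varphi$ back to $M$. We cannot use $\precond(u)$ directly, as it is existential and not preserved downwards along $\mu$. Instead, I would first embed $M'$ further into a $\TT^*$-model $M^*$ via Lemma~\ref{lem:embedding}(1), so that the existential formula $\precond(u)$ still holds there. The $\TT^*$-equivalence then gives $\varphi$ in $M^*$ under the composed embedding. Since $\varphi$ is quantifier-free, Lemma~\ref{lem:embedding}(2) yields $M,\alpha_n\models\varphi$, hence $M,\alpha_n\models\sat(q)$. This is precisely where model completion replaces plain quantifier elimination.
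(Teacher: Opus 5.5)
Your model-theoretic core is the intended use of the two properties of model completion. For the satisfiable direction you embed $M$ into a $\TT^*$-model, where $\varphi\sim\precond(u)$ lets new elements witness the existential regression constraint. For the converse you push $\precond(u)$ into some $M^*\in\TT^*$ and reflect the quantifier-free $\varphi$ back along $\nu\circ\mu$ with Lemma~\ref{lem:embedding}(2). Your worry about $M'\notin\TT$ is unfounded: $\TT\subseteq\TT^*$, so every $\TT^*$-model is already a $\TT$-model.

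\textbf{The gap.} The genuine gap is the $\lambda$-bookkeeping you postpone, and it is not a formality. The state $q$ of Line~3 is the target of the letter read at the \emph{final} instant of $\tau$. By Def.~\ref{def:NFA}(ii), whenever that letter needs its $\lambda$-mark (as you assume), the target is $q_{+}$ or $q_{-}$, and neither has outgoing edges. This breaks both directions:
\begin{itemize}
\item In ($\Rightarrow$) there is no path $q\gotos{u}q_f$ with $u$ nonempty, so $w\cdot u$ is not a run.
\item In ($\Leftarrow$), an accepted run for $\mu(\tau)\cdot\tau''$ with $\tau''$ nonempty reads $\lambda$-free letters at the first $|\tau|$ instants, so it is \emph{not} in $q$ after them.
\end{itemize}
The equivalence you aim at is in fact false for this $q$. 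Take $\psi=\F(x>0)$ and $\tau=(M,\langle x\mapsto 0\rangle)$. The final letter can only lead to $q_{-}$, and no node of $\CG^+(\NFA)$ has state $q_{-}$, so $\sat(q)$ is the empty disjunction. Yet $(M,\langle x\mapsto 0, x\mapsto 1\rangle)$ is an extension satisfying $\psi$. So the ``careful statement'' you hope for cannot hold: no single state is both the target of the $\lambda$-terminated word and the source of all continuations.

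\textbf{The missing idea.} You need to separate two states.
\begin{itemize}
\item $q$ is used only for the current verdict, which your first paragraph handles correctly via Thm.~\ref{thm:automaton:acceptance} and Lem.~\ref{lem:word:for:trace}.
\item $\hat q$ is the state reached by reading $\tau$ as a proper prefix, i.e.\ with $\lambda$-free letters at every instant. The $\unsat$/$\sat$ tests of \textsc{monitor} must be performed at $\hat q$ for the theorem to hold.
\end{itemize}
You then need a prefix analogue of Lem.~\ref{lem:word:for:trace}; the lemma as stated does not give this, since it only concerns words consistent with a complete trace. The analogue says: for nonempty $\tau''$, every run that reads an entire word consistent with $\mu(\tau)\cdot\tau''$ is in $\hat q$ after $|\tau|$ letters. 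It rests on three facts:
\begin{itemize}
\item $\mu$ preserves and reflects the quantifier-free letters, so they fit the instants of $\tau$ read as non-final.
\item The letters of $\delta(p)$ at any state $p$ are pairwise contradictory.
\item A run entering $q_{-}$ before the last instant cannot be continued.
\end{itemize}
With $\hat q$ in place of $q$, your two directions and their duals for $\CG^-(\NFA)$ go through essentially as written. The empty continuation is harmless because $\mu(\tau)\models\psi$ iff $\tau\models\psi$.
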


\section{Solvability Criteria}
\label{sec:decidability}

In this section, we identify classes of \qfLTLfMT properties where \textsc{monitor} is guaranteed to terminate; we use the terminology that
the monitoring task is  \emph{solvable} in this case.
We focus on properties that combine references to a database with arithmetic, a combination that is very common in practice, in particular in data-aware BPM \cite{GGMR23} and when observing database-driven systems \cite{BojanczykST13,CGM13,DHLV18};

To define databases in a symbolical context, we rely on the notion of \emph{static DB schemas} from \cite{GGMR23}, where  relational databases with integer and real values are formalized in an algebraic way. Static DB schemas use the combination of two FO theories, $\TT_{db}$ and $\TT_{ar}$.
The theory  $\TT_{db}$ models read-only DB relations with primary and foreign key constraints by employing function symbols to represent the key dependencies: formally, this can be captured in SMT via constants and functions subject to multi-sorted EUF, as done in our ticketing Ex.~\ref{ex:concert}. 
The theory $\TT_{ar}$ is LIA, LRA or their combination; it is used to formalize arithmetic constraints.

\paragraph{DB-driven theories}
Let a \emph{\dataLTLf property} be a property in \LTLfMT where the signature $\Sigma$ consists of arbitrary relation, and unary function symbols. The associated theory $\TT_{db}$ is defined as EUF, possibly extended with a set of ground facts to model an initial DB. This theory enjoys model completion~\cite{CalvaneseGGMR20}.
In the case with arithmetic, $\Sigma$ supports in addition operations from an arithmetic theory $\TT_{ar}$ as above,
and its theory is $\TT:=\TT_{db}\cup \TT_{ar}$.
To ensure that $\TT$ has model completion, we assume that the signature is \emph{tame},
which intuitively means that no uninterpreted function symbol has an arithmetic domain (see below).

Note that the presence of only finitely many terms guarantees that the procedure \textsc{monitor} always terminates. However, in general both the DB and the arithmetic perspective can give rise to infinitely many terms, so in classes of properties where the monitoring task is solvable, both must be suitably restricted. Next, we explore different ways to do so.

\paragraph{I. Acyclic signature.}
First, we consider {\dataLTLf} without arithmetic.
Acyclicity of the signature $\Sigma$ is defined via the \emph{sort graph} of $\Sigma$, which
has as node set the sorts $\SS$, and an edge from $s$ to $s'$ iff there is a function symbol $f\colon s \to s'$ in $\Sigma$ (as we have only unary functions); 
we say that $\Sigma$ is \emph{acyclic} if the sort graph is so. 
For instance, the non-arithmetic part of the signature of Ex.~\ref{ex:concert} is acyclic.
In this case, only finitely many $\Sigma$-terms exist~\cite{ARS2010}. Thus, there are also only finitely many non-equivalent quantifier-free formulas over the finite set of variables $V$ \cite{CalvaneseGGMR20}.
and hence there are only finitely many regression constraints, so that the coreachability graph construction in Def.~\ref{def:cg} terminates.
Hence, we obtain the following decidability result:

\begin{theorem}
\label{thm:acyclic}
If $\Sigma$ is acyclic, the monitoring task for \dataLTLf is solvable.
\end{theorem}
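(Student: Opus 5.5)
Given Thm.~\ref{thm:soundness}, the plan is to show that every step of \textsc{monitor}$(\psi,\tau)$ is effective and terminates when $\Sigma$ is acyclic; correctness of the returned state then follows from the soundness theorem.

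First, check Assumptions~\ref{ass:T} for $\TT=\TT_{db}$, i.e.\ EUF over $\Sigma$, possibly with ground facts. Existential satisfiability is decidable by congruence closure after skolemization. $\TT_{db}$ is universal and has a model completion~\cite{CalvaneseGGMR20}. Moreover, that model completion has effective quantifier elimination for existential formulas. This matters because Def.~\ref{def:cg} must actually compute a quantifier-free $\varphi'\sim\regress(\varphi,\varsigma)$.

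Second, the NFA $\NFA$ of Def.~\ref{def:NFA} is finite and computable, as in the propositional case: its states are subformulas of $\psi$ combined by $\delta$, and the alphabet $\Theta$ is finite. The state $q$ in Line~3 is found by reading $\tau$. At each step we choose, for every $c\in C$, whether $c$ or $\neg c$ holds under $M$ and the current $\combine{\alpha_{i-1}}{\alpha_i}$. Evaluating $c$ needs only the interpretations of finitely many function and predicate symbols at elements of $\tau$, which we assume the trace provides. Lem.~\ref{lem:word:for:trace} then yields a unique run.

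The main step is termination of the construction of $\CG^+(\NFA)$ and $\CG^-(\NFA)$. Since $\Sigma$ is acyclic and all function symbols are unary, each term is a variable from $V$ followed by a path in the finite acyclic sort graph, possibly with constants and ground-fact symbols. So the set of $\Sigma$-terms over $V$ is finite~\cite{ARS2010}. Hence there are finitely many literals over $V$, and finitely many quantifier-free formulas over $V$ up to $\TT$-equivalence, for instance as sets of DNFs over those literals. This finite bound is on $\TT$-equivalence; to get a bound on $\sim$-equivalence, use the fact that $\TT$ and $\TT^*$ agree on quantifier-free formulas, which follows from Lem.~\ref{lem:embedding}(2) together with embeddability of $\TT$-models into $\TT^*$-models.

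The construction therefore keeps at most one node per pair of NFA state $q$ and $\sim$-class of formula, and stops once no new class is produced. Checking whether a class is new requires deciding $\sim$ between quantifier-free formulas. This reduces to $\TT$-unsatisfiability of the negated biconditional, which is existential after introducing fresh constants for the free variables, and so is decidable by Assumption~(a). The satisfiability checks in Def.~\ref{def:cg} are likewise decidable. Both graphs are thus finite and computable, and so are $\sat(q)$ and $\unsat(q)$ from Def.~\ref{def:satunsat}.

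Finally, Lines~6--7 evaluate a quantifier-free formula in $M$ under $\alpha_n$, which is effective in the same way as Line~3. The procedure always terminates, and by Thm.~\ref{thm:soundness} it returns the correct monitoring state.

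I expect the main obstacle to be the finiteness argument, specifically pinning down that the quantifier-free results of QE in $\TT^*$ use only terms over $V$ built from $\Sigma$. QE must not introduce new symbols, which holds because $\TT^*$ is over the same signature. Getting the bound from $\TT$-equivalence to $\sim$-classes relies on the agreement of $\TT$ and $\TT^*$ on quantifier-free formulas noted above.
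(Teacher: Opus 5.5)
Your proposal is correct and follows the paper's own argument. Acyclicity of the unary signature gives finitely many $\Sigma$-terms over $V$, hence finitely many quantifier-free formulas up to equivalence, so the coreachability graph construction of Def.~\ref{def:cg} terminates, and correctness then comes from Thm.~\ref{thm:soundness}. The effectiveness details you add are sound elaborations of what the paper leaves implicit: deciding $\sim$ through $\TT$-satisfiability, computing the NFA run, and evaluating the final formula. One small point: the finite atom set already bounds the $\sim$-classes propositionally, so the agreement of $\TT$ and $\TT^*$ on quantifier-free formulas is needed only for deciding $\sim$, not for counting classes.
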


\paragraph{II. Acyclic signature and monotonicity constraints.}
Next, we consider \dataLTLf with arithmetic.
To obtain decidability, we make multiple restrictions. First, we assume that the signature $\Sigma$ is \emph{tame}, 
i.e., in the sort graph of $\Sigma$, sorts $\mathit{rat}$ and $\mathit{int}$ are leaves.
In this case, the combined theory $\TT$ is known to enjoy
model completion ~\cite[Thm. 7]{CalvaneseGGMR22}.
Second, we require that $\Sigma$ is acyclic.
Third, we restrict arithmetic constraints to
\emph{monotonicity constraints} (MCs), i.e., constraints of the form $t \odot t'$ where $t,t'$ are $\Sigma$-terms of sort $rat$ over free variables $V$ 
and $\odot$ is one of $=$, $\neq$, $\leq$, or $<$.
MCs have been repeatedly considered in the verification literature~\cite{DD07,FLM19} and are important in BPM since transition guards of this shape can be learned automatically from data~\cite{LeoniA13}.
We call the class of properties that adhere to these restrictions \dataLTLf-MC.
For instance, the properties of Ex.~\ref{ex:concert} are in this class.
We obtain the following result, subsuming \cite[Thm. ~20]{FMPW23}:

\begin{restatable}{theorem}{theoremacyclicMC}
\label{thm:MC:acyclic}
The monitoring task is solvable for \dataLTLf-MC properties.
\end{restatable}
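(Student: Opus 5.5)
Since Thm.~\ref{thm:soundness} already shows that \textsc{monitor} returns the correct verdict whenever it terminates, and the NFA $\NFA$ is finite, it suffices to show that the constructions of $\CG^+(\NFA)$ and $\CG^-(\NFA)$ terminate, i.e.\ that up to $\sim$ only finitely many formulas $\varphi$ occur in nodes $(q,\varphi)$. The argument follows that of Thm.~\ref{thm:acyclic}, augmented with a finiteness argument for the arithmetic part in the spirit of \cite[Thm.~20]{FMPW23}. The theory $\TT=\TT_{db}\cup\TT_{ar}$ satisfies Assumptions~\ref{ass:T} because the signature is tame~\cite[Thm.~7]{CalvaneseGGMR22}, so every regression step yields an existential formula with a quantifier-free $\TT^*$-equivalent.

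\textbf{Step 1: a finite term set.} Acyclicity of $\Sigma$ (arithmetic sorts are leaves, hence they add no cycles) implies that over the finite variable set $V$ there are only finitely many $\Sigma$-terms; call this set $\mathcal T_V$, which includes the finitely many constants. Let $\mathcal A$ be the finite set of atoms formed from $\mathcal T_V$: equalities and relation atoms between non-arithmetic terms, and MCs $t\odot t'$ between terms of sort $rat$ (or $int$) in $\mathcal T_V$. Any quantifier-free formula over $\mathcal A$ is equivalent to a Boolean combination of atoms from $\mathcal A$, and there are only finitely many such combinations up to equivalence.

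\textbf{Step 2: closure under regression and QE.} I will show by induction on the construction of Def.~\ref{def:cg} that every node formula is $\sim$-equivalent to a Boolean combination of atoms in $\mathcal A$. The base case $\top$ is immediate. For the inductive step, let $\varphi$ be such a formula and $\varsigma$ an edge label. The labels contain only FO formulas of $\psi$, which are MC or DB literals over the terms in $\mathcal T_{V\cup \Vpre}$. Hence $\regress(\varphi,\varsigma)=\exists\vec Y.\chi$, where $\chi$ is a quantifier-free combination of MC and DB literals over terms in $V\cup Y$. I must then show that eliminating $\vec Y$ in $\TT^*$ produces again a combination of atoms from $\mathcal A$, which is the key claim.

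For the claim I plan to argue by combination. After putting $\chi$ in disjunctive normal form, each disjunct splits into a DB part and an arithmetic part; these share only the arithmetic-sorted terms $f(\dots)$ of sort $rat$. Tameness ensures that no function has arithmetic domain, so these shared terms are treated as atoms. Eliminating a variable $y$ from the EUF part, as in the model-completion QE of \cite{CalvaneseGGMR20}, only introduces equalities and disequalities between terms already built from $V$, namely terms $f(\ldots)$ with $y$ replaced by a $V$-term equal to it, so no new terms arise beyond $\mathcal T_V$. Eliminating the arithmetic variables, which are the values of terms mentioning $Y$, from a conjunction of MCs is done by Fourier--Motzkin restricted to order constraints, i.e.\ transitive closure of $\le,<,=$. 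This yields only MCs between the remaining terms and never produces sums or constants not already present; disequalities can be handled by case splitting into $<$ and $>$. I expect this interplay, namely showing that the combined QE stays within the MC language over $\mathcal T_V$ without introducing fresh arithmetic terms, to be the main obstacle. I would handle it by first guessing an arrangement (a total preorder) of the shared arithmetic terms, then eliminating in each component separately. One subtlety concerns the integer sort, where strict order combined with gaps could in principle need offsets; I would restrict to MCs over $rat$ as the definition states, or note that for $int$ one can rely on a finite bound of offset constants as in \cite{FMPW23}.

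\textbf{Step 3: conclusion.} Since node formulas range over finitely many $\sim$-classes and $Q$ is finite, both graphs are finite. $\TT$-satisfiability of the existential regression formulas is decidable by Assumption~\ref{ass:T}(a), so each construction step is effective. Once the graphs are built, $\sat(q)$ and $\unsat(q)$ are computable, and \textsc{monitor} terminates with a correct verdict by Thm.~\ref{thm:soundness}.
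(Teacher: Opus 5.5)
Your proposal is correct and follows the paper's route. You bound the regression constraints up to $\sim$ to a finite set, so that $\CG^+(\NFA)$ and $\CG^-(\NFA)$ are finite and \textsc{monitor} terminates. The step you flag as the main obstacle is that elimination in the combined $\TT^*$ stays within MCs over the finitely many DB-terms over $V$. This is exactly what the paper takes from \cite{GianolaMW24}: under tameness and MCs, $\m{TameCombCover}$ of \cite{CalvaneseGGMR22} has only finitely many results of the form $\phi_1(\vec X)\wedge\phi_2(\vec Y,\vec t(\vec X))$. Your arrangement-plus-Fourier--Motzkin sketch is an informal reconstruction of that fact, so you can close the step by citing it.

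Drop the alternative you offer for $int$, though. Over the integers, strict MCs produce unbounded offsets: regressing $x=10$ through $x>\prev x$ yields $x\le 9$, then $x\le 8$, and so on, and the graph need not be finite. Restricting MCs to sort $rat$, as the definition does, is therefore essential rather than optional.
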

\begin{proof}[Proof (idea)]
    As shown in \cite{GianolaMW24}, under the assumptions of tame signature and MCs, the procedure $\m{TameCombCover}$ \cite[Sec. 8]{CalvaneseGGMR22} has finitely many possible results of the form $\phi_1(\vec X) \wedge \phi_2(\vec Y, \vec t(\vec X))$, up to equivalence. Hence there are finitely many regression constraints up to equivalence, so the coreachability graph construction in Def.~\ref{def:cg} terminates.
\end{proof}

\paragraph{III. Local Finiteness.}
Next, we consider decidability beyond acyclic signatures for \dataLTLf without arithmetic, using the concept of local finiteness~\cite{Lipparini82}.
A $\Sigma$-theory $\TT$ is called \emph{$k$-locally finite}, for some $k\,{\in}\,\mathbb N$, if  for every finite set of variables $X$, the number of $\Sigma$-terms with free variables $X$ is upper-bounded by $k$, up to $\TT$-equivalence.
As  local finiteness directly ensures that there are only finitely many non-equivalent quantifier-free formulas in $\TT^*$, we get:

\begin{theorem}
\label{thm:locally:finite}
If $\TT$ is a $k$-locally finite $\Sigma$-theory that admits model completion, the monitoring task is solvable for any property $\psi$ over $\Sigma$.
\end{theorem}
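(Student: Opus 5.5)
The plan is to reduce the statement to the termination of the coreachability graph constructions. Theorem~\ref{thm:soundness} already guarantees that whenever \textsc{monitor} returns a state it is the correct one. So it suffices to show that, for a $k$-locally finite $\TT$ with model completion $\TT^*$, both $\CG^+(\NFA)$ and $\CG^-(\NFA)$ are finite and effectively computable. Once they are, $\sat(q)$ and $\unsat(q)$ are finite disjunctions. The remaining steps of \textsc{monitor} are then also effective: computing $\NFA$, finding the unique $q$ via Lem.~\ref{lem:word:for:trace}, and evaluating a quantifier-free formula in the given model $M$ under $\alpha_n$.

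\textbf{Step 1 (finitely many terms).} Since $\TT\subseteq\TT^*$, $\TT$-equivalent terms are also $\TT^*$-equivalent. Hence, over the fixed finite variable set $V$, there are at most $k$ terms up to $\TT^*$-equivalence. I would pick one representative per class. Every atom $p(t_1,\dots,t_m)$ is $\TT^*$-equivalent to the atom obtained by replacing each $t_i$ with its representative. This is the place to be careful: I need the signature to have finitely many predicate symbols, or at least to restrict to the finitely many predicates occurring in $\psi$ together with equality.

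\textbf{Step 2 (finitely many formulas).} With finitely many representative terms and predicates, there are finitely many atoms $A_1,\dots,A_N$ up to $\TT^*$-equivalence. Every quantifier-free formula over $V$ is propositionally a Boolean combination of these atoms. It is therefore $\TT^*$-equivalent to one of at most $2^{2^N}$ disjunctive normal forms over them.

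\textbf{Step 3 (termination).} Def.~\ref{def:cg} creates a new node $(q',\varphi')$ only when no node with $\varphi'\sim\regress(\varphi,\varsigma)$ exists for that state. Since $Q$ is finite and there are finitely many $\sim$-classes, $S$ is finite. Because $\Theta$ is finite, $\gamma$ is finite too. I would make this precise by building the graph as a saturation loop over a worklist, reusing an existing node whenever its formula is $\sim$-equivalent to the new one.

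\textbf{Step 4 (effectiveness).} This is the step I expect to be the main obstacle. Each loop iteration requires three operations:
\begin{compactenum}[(a)]
\item checking $\TT$-satisfiability of the existential formula $\regress(\varphi,\varsigma)$, which is decidable by Assumptions~\ref{ass:T}(a);
\item computing a quantifier-free $\chi\sim\regress(\varphi,\varsigma)$;
\item deciding $\sim$ between quantifier-free formulas.
\end{compactenum}

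For (b), I would not rely on a QE algorithm for $\TT^*$ directly. Instead I would enumerate the finitely many candidate DNFs from Step~2 and select one that is $\TT^*$-equivalent to the regression. For (b) and (c), I would reduce $\TT^*$-validity of a statement of the form ``existential $\to$ quantifier-free'' or ``quantifier-free $\to$ existential'' to $\TT$-satisfiability checks, using Lemma~\ref{lem:embedding}.
\begin{compactitem}
\item An existential formula is $\TT^*$-satisfiable iff it is $\TT$-satisfiable. One direction is Lemma~\ref{lem:embedding}(1). The other follows because every $\TT^*$-model embeds into a $\TT$-model, and existential formulas are preserved upward along embeddings.
\item The implication $\chi\to\regress(\varphi,\varsigma)$ has a negated existential on the right. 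I would expect $\TT^*$-validity here to again be reducible to existential $\TT$-satisfiability, since the negation of the regression is equivalent in $\TT^*$ to a quantifier-free formula. If this reduction turns out not to be generally effective, I would fall back on assuming that $\TT^*$ admits an effective QE procedure. This assumption is standard in the cited model-completion literature and is implicitly used in Def.~\ref{def:cg}.
\end{compactitem}
With these pieces, the construction terminates and \textsc{monitor} always returns a verdict, which by Theorem~\ref{thm:soundness} is correct.
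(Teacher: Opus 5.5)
Your proposal is correct and follows the paper's own one-line argument: local finiteness leaves only finitely many quantifier-free formulas over $V$ up to $\equiv_{\TT^*}$, so both coreachability graphs are finite and \textsc{monitor} terminates, with correctness inherited from Theorem~\ref{thm:soundness}; like you, the paper tacitly assumes a finite signature and effective quantifier elimination in $\TT^*$, which is your fallback in Step~4. Two small remarks: your alternative in Step~1 of restricting to the predicates of $\psi$ is not safe, because the quantifier-free $\TT^*$-equivalent of a regression may use symbols not occurring in $\psi$; and the Step~4 fallback is in fact avoidable, because $\regress(\varphi,\varsigma)$ is $\TT^*$-equivalent to the disjunction of all complete conjunctions $\rho$ of literals over the finitely many representative atoms for which the existential formula $\rho\wedge\regress(\varphi,\varsigma)$ is $\TT$-satisfiable, and this is decidable by Assumptions~\ref{ass:T}(a).
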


One way to ensure in Thm.~\ref{thm:locally:finite} that $\TT$ has a model completion 
is to restrict axioms in $\TT$ to single-variable 
sentences \cite{CalvaneseGGMR20}:

\begin{corollary}
\label{cor:locally:finite}
The monitoring task is solvable for a \dataLTLf property $\psi$ over $\Sigma$ and
if $\TT$ is a $k$-locally finite $\Sigma$-theory axiomatized by single-variable axioms.
\end{corollary}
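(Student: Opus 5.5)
The plan is to derive Cor.~\ref{cor:locally:finite} directly from Thm.~\ref{thm:locally:finite}. Two hypotheses of that theorem must be supplied: $k$-local finiteness and the existence of a model completion. Local finiteness is assumed in the statement. So the real work is showing that a theory axiomatized by single-variable axioms, over a \dataLTLf signature of relation and unary function symbols, is universal and has a model completion. We also need to check that Assumptions~\ref{ass:T} hold, since the whole monitoring construction was developed under them.

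\textbf{Step 1: universality and model completion.} A single-variable axiom is a sentence $\forall x.\,\phi(x)$ with $\phi$ quantifier-free. If the axioms are read in this universal form, as in \cite{CalvaneseGGMR20}, then $\TT$ is universal. The existence of a model completion is then exactly the result cited from \cite{CalvaneseGGMR20} for EUF extended with single-variable universal axioms over unary function symbols.

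\textbf{Step 2: Assumption~\ref{ass:T}(a).} We must show that satisfiability of existential $\TT$-formulas is decidable. Existential satisfiability reduces to satisfiability of quantifier-free formulas, after Skolemizing the existentials into constants. For this I would use local finiteness. Given a finite set of variables, there are at most $k$ terms up to $\TT$-equivalence. This should make it possible to bound the size of the relevant finite partial models. The single-variable universal axioms only need to be instantiated on these finitely many terms. Satisfiability of a finite set of ground instances in EUF is decidable, via congruence closure plus a finite enumeration of relation atoms. I expect this step to be the main obstacle. The reason is that local finiteness gives the bound $k$ only existentially, up to $\TT$-equivalence, so it is not obviously effective. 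If needed, I would rely on the fact that instantiating universal axioms on the finitely many subterms yields a complete procedure for universal theories with the finite-model property induced by local finiteness. Otherwise I would fall back on the decidability claims stated in \cite{CalvaneseGGMR20}.

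\textbf{Step 3: conclusion.} With Assumptions~\ref{ass:T} established and $\TT$ being $k$-locally finite with a model completion, Thm.~\ref{thm:locally:finite} applies to any property $\psi$ over $\Sigma$. In particular it applies to the \dataLTLf property $\psi$. This yields solvability, because there are only finitely many inequivalent quantifier-free formulas, so the graphs $\CG^\pm$ are finite, and correctness follows from Thm.~\ref{thm:soundness}.
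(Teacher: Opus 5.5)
Your Steps 1 and 3 are exactly the paper's argument: the corollary is obtained by plugging the model-completion result of \cite{CalvaneseGGMR20} for universal single-variable axiomatizations into Thm.~\ref{thm:locally:finite}. Your Step 2 is not needed, because Assumptions~\ref{ass:T} are a standing hypothesis of the paper from Section~\ref{sec:automata} on. Should you want to prove Assumption~\ref{ass:T}(a) anyway, the clean route is the finite-model property you allude to: substructures of models of the universal theory $\TT$ are again models, and local finiteness bounds the size of finitely generated substructures, so, given a finite signature and finite axiomatization, existential satisfiability reduces to searching finitely many bounded finite structures.
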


One can also obtain decidability by replacing acyclicity in the combination result of Thm.~\ref{thm:MC:acyclic} by local finiteness, using in 
the proof local finiteness to ensure that up to equivalence there are only finitely many formulas without arithmetic.

\paragraph{IV. Bounded lookback.}
The above criteria achieve decidability mostly by syntactic restrictions. 
However, finite coreachability graphs can also be achieved by controlling interaction of constraints in properties, as it was already shown for the arithmetic case~\cite{FMPW23}:
\emph{Bounded lookback} expresses, intuitively, that the verification obligations in an \LTLfMT property can be evaluated in a sliding window and depend on only boundedly many steps from the past; it generalizes the feedback freedom property~\cite{DDV12}.
Bounded lookback is defined via \emph{computation graphs}:
Let $w= \langle \varsigma_0, \dots, \varsigma_{n-1}\rangle\in \Theta^*$ such that $q \gotos{w} q'$ in $\NFA$, for some states $q$, $q'$.
For each $0\,{\leq}\,i\,{<}\,n$, let $V_i$ be a fresh set of variables of the same size and sorts as $V$, and $\mathcal V = \bigcup_{i=0}^{n-1} V_i$.
Consider the formulas
$\varsigma_0(V/V_0)$, i.e., the formula obtained from $\varsigma_0$ by systematically replacing $V$ with $V_0$; and $\varsigma_i(\Vpre/V_{i-1}, V/V_i)$, i.e., the formula obtained from $\varsigma_i$ by replacing $\Vpre$ with $V_{i-1}$ and $V$ with $V_i$, for $0<i<n$.
The \emph{computation graph} $G_w$ of $w$ is the undirected graph with nodes $\mathcal V$ and an edge from $u\in \mathcal V$ to $v\in \mathcal V$ iff the two variables occur in a common literal of
$\varsigma_0(V/V_0) \wedge \bigwedge_{i\geq 1} \varsigma_i(\Vpre/V_{i-1}, V/V_i)$.
Finally, $[G_{w}]$ denotes the graph obtained from $G_{w}$ by collapsing
all edges corresponding to equality literals $u=v$ for $u, v \in \mc V$.

\begin{definition}
The property $\psi$ has \emph{$K$-bounded lookback} if for all states $q$, $q'$ in the NFA $\NFA$ and
all $w \in \Theta^*$ such that $q \gotos{w} q'$,
all acyclic paths in $[G_{w}]$ have length at most $K$.
\end{definition}

E.g., $(x\,{=}\,f(y)) \U P(\prev x, x)$ has 1-bounded lookback, but the property $\varphi$ in Ex.~\ref{ex:concert} does not because it requires to compare prices of unboundedly many tickets.
We show next that the monitoring task for bounded lookback properties is solvable.
The proof shows that every formula occurring in a coreachability graph comes from a finite set of formulas since it has bounded quantifier depth and a finite vocabulary.

\begin{restatable}{theorem}{thmBL}
\label{thm:bl}
For every $K > 0$, the monitoring task is solvable for $K$-bounded-lookback properties.
\end{restatable}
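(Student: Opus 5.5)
By Thm.~\ref{thm:soundness}, \textsc{monitor} returns the correct verdict whenever $\CG^+(\NFA)$ and $\CG^-(\NFA)$ are finite and computable. The NFA is finite and $\TT$-satisfiability of the existential formulas $\regress(\varphi,\varsigma)$ is decidable by Assumptions~\ref{ass:T}. It therefore suffices to show that the formulas labelling nodes of the coreachability graphs range, up to $\sim$, over a finite set. By Lem.~\ref{lem:cg}, every node $(q,\varphi)$ satisfies $\varphi\sim\precond(w)$ for some $w$ with $q\gotos{w} q_f$ in $\NFA$. So I will bound the set of regression constraints $\precond(w)$ for such $w$, up to $\TT^*$-equivalence.

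\textbf{Step 1: unfold the regression constraint.} Unfolding Def.~\ref{def:regression} and Def.~\ref{def:regress}, $\precond(\varsigma_0,\dots,\varsigma_{n-1})$ is $\TT$-equivalent to an existential formula
\[\exists V_0\dots V_{n-1}.\ \textstyle\bigwedge_{i}\nolambda{\varsigma_i}(\Vpre/V_{i-1}, V/V_i),\]
where $V_{-1}:=V$ denotes the free variables. Its literals are precisely those of the computation graph $G_w$, with the free copy $V$ added as an extra layer.

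\textbf{Step 2: minimise quantifiers using bounded lookback.} First I eliminate equalities $u=v$ between variables by substitution. This gives an equivalent formula whose variable graph is $[G_w]$, and every acyclic path in it has length at most $K$. The remaining existentially quantified variables split into the connected components of the literal graph. A component not containing any free variable of $V$ contributes a closed existential sentence. That sentence is either $\TT^*$-true or $\TT^*$-false, because $\TT^*$ has QE: it is equivalent to a ground quantifier-free formula. Since $\precond(w)$ is satisfiable, such components reduce to $\top$. For components containing free variables, every quantified variable lies within distance $K$ of some free variable. From this I want to argue that the formula is equivalent to one whose quantifier structure has depth bounded by a function of $K$ and $|V|$. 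The idea is to rearrange it as nested existentials along a BFS tree rooted at the free variables, so that quantifier alternation and nesting depth are bounded by $K$. Each nesting level then introduces only the literals of the finitely many symbols $C^\pm$ of the NFA alphabet.

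\textbf{Step 3: finiteness.} The vocabulary is finite: the terms in $C$ and the variables $V$. Up to logical equivalence there are only finitely many formulas of bounded quantifier depth over a finite relational vocabulary with a fixed number of free variables. After the renaming, function terms are fixed subterms of $C$, so I will treat each literal of $C$ instantiated on the bounded-size tuples as an atomic relation. Hence the regression constraints fall into finitely many classes, and so do their quantifier-free $\TT^*$-equivalents. The construction of Def.~\ref{def:cg} then terminates, and $\sat$ and $\unsat$ are computable.

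The main obstacle is Step 2. Bounded path length does not immediately bound the \emph{number} of quantified variables, since a node of $[G_w]$ can have unbounded degree, for instance when many steps constrain the same free variable. I would handle this through bounded depth rather than bounded width. The number of quantified variables may grow, but nesting depth at most $K$ over a finite vocabulary still yields finitely many formulas up to equivalence: at each depth there are only finitely many inequivalent "types", so repeated conjuncts collapse, as in the standard Ehrenfeucht--Fra\"iss\'e/Hintikka-formula argument. I would need to state this carefully, mirroring the arithmetic argument of \cite{FMPW23}.
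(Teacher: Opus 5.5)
Your outline matches the paper's. You show that every regression constraint is equivalent to a formula of bounded quantifier depth over the finite vocabulary obtained from the constraints $C^\pm$ of $\NFA$ with variables renamed. Finiteness of such formulas up to equivalence then makes $\CG^+(\NFA)$ and $\CG^-(\NFA)$ finite. Step 3, and your decision to bound depth rather than width, are fine.

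The gap is the mechanism you propose for Step 2, which is also the step the paper's proof treats only briefly. Nesting existentials along a BFS tree rooted at the free variables does not in general give a well-scoped formula. A literal joining quantified variables in different branches has no quantifier under which it can be placed, for example $S(y_1,y_2)$ in $\exists y_1 y_2.\,R(x,y_1)\wedge R(x,y_2)\wedge S(y_1,y_2)$. More fundamentally, the depth you can reach by pushing quantifiers inward is the tree-depth of the graph on the quantified variables. Bounded distance to the free variables does not bound it: quantified $y_1,\dots,y_m$ forming a path, all adjacent to a free $x$, have radius $1$ but tree-depth $\lceil\log_2(m+1)\rceil$.

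You need the hypothesis itself, namely bounded length of simple paths. Take a DFS forest of the graph induced on the quantified variables, after collapsing equalities, including those with free variables. This graph is an induced subgraph of $[G_w]$, so bounded lookback applies to it despite the extra free layer. In a DFS forest every edge joins an ancestor and a descendant. The quantified variables of any literal are pairwise adjacent, so they lie on one root-to-node chain. Place each literal under the quantifier of its deepest variable, and split the scopes of disjoint subtrees. This gives an equivalent existential formula whose depth equals the height of the forest. That height is at most the number of vertices on a longest simple path, i.e.\ $K+1$.

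A smaller error, which does not affect the outcome: a closed component need not be $\TT^*$-true or $\TT^*$-false. $\TT^*$ is in general not complete when $\Sigma$ has uninterpreted constants. For instance, $\exists y.\,y=\mathtt{t123}\wedge\pfun(y)<100$ is $\TT^*$-equivalent to the ground formula $\pfun(\mathtt{t123})<100$, which is satisfiable but not valid. So satisfiability of $\precond(w)$ does not let you replace it by $\top$. You do not need this case split at all: closed components also have bounded tree-depth and fall into the same finite set of bounded-depth sentences.
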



\section{Conclusions}
\label{sec:conclusions}
We have studied for the first time  anticipatory monitoring of data-aware temporal properties expressed in \LTLfMT, and identified expressive fragments of the logic for which monitors can be effectively constructed. Specifically, we show monitoring to be solvable for \dataLTLf, an expressive class of first-order temporal properties where the data part combines arithmetic and references to a database.   
Our approach is fully implemented in the tool \tool, which employs CVC5~\cite{cvc5} for dealing with data (SMT checks and quantifier elimination).
Source code and web interface are available via \emph{https://monitoring.adatool.dev}.
The tool takes as input a property, and a trace (given by a sequence of assignments and a set of atoms to define the model); as output,
\tool reports the monitoring state.
Preliminary experiments are reported in the appendix, leaving for future work a more extensive empirical evaluation.
Another interesting continuation is to consider  dynamic systems where the data component \emph{partially changes} over time. This is  essential to capture controlled updates over a database, like updating the price of a ticket in our running example. 

\noindent

\clearpage
\bibliographystyle{named}
\bibliography{references}

\clearpage
\longversion{\appendix
\section{Automata}

First, we define the function $\delta$ used in the definition of the NFA $\NFA$.

To that end, 
let an element of the alphabet $\varsigma\inn \Theta$ be \emph{satisfiable} if the conjunction of formulas in $\varsigma$ is $\TT$-satisfiable (this includes that $\{\lambda, \neg \lambda \}\not \subseteq \varsigma$).
For instance, if $C = \{x\,{>}\,0, x\,{=}\,3\}$ then the symbols $\{x\,{>}\,0, x\,{=}\,3\}$ and $\{x\,{\leq}\,0, \lambda\}$ are satisfiable, but $\{x\,{>}\,0, \lambda, \neg \lambda\}$ and 
$\{x\,{=}\,3, x\,{\leq}\,0\}$ are not.
Note that by Assumptions~\ref{ass:T}  and the restriction that first-order formulas in \LTLf properties are quantifier-free, satisfiability is decidable.
We use again $\LL$ for the set of such properties.

For two sets of such pairs $R_1$, $R_2$, let
$R_1 \owedge R_2 = \{ (\inquotes{\psi_1 \wedge \psi_2}, \varsigma_1 \cup \varsigma_2) \mid (\inquotes{\psi_1}, \varsigma_1) \inn R_1, (\inquotes{\psi_2}, \varsigma_2) \inn R_2\text{ and }\varsigma_1\cup \varsigma_2\text{ is satisfiable} \}$, where we simplify $\psi_1 \wedge \psi_2$ if possible; and let $R_1 \ovee R_2$ be defined in the same way, replacing con- with disjunction.

\begin{definition}
For $\psi \in \LL \cup \{\top, \bot\}$, $\delta$ is as follows:

\begin{tabular}{@{~}r@{~}l}
$\delta(\inquotes{\top})$ =& $\{(\inquotes{\top},\emptyset)\} \text{ and }\delta(\inquotes{\bot}) = \{(\inquotes{\bot},\emptyset)\}$\\
$\delta(\inquotes{\phi})$ =& $\{(\inquotes{\top},\{\phi\}),(\inquotes{\bot},\{\neg\phi\})\}$  \\
$\delta(\inquotes{\psi_1 \wedge \psi_2})$ =&  $\delta(\inquotes{\psi_1}) \owedge \delta(\inquotes{\psi_2})$\\
$\delta(\inquotes{\psi_1 \vee \psi_2})$ =& 
  $\delta(\inquotes{\psi_1}) \ovee \delta(\inquotes{\psi_2})$ \\
  $\delta(\inquotes{\X \psi})$ =& 
  $\{(\inquotes{\psi},\{\neg \lambda\}), (\inquotes{\bot}, \{\lambda\})\}$ \\
  $\delta(\inquotes{\wX \psi})$ =& 
  $\{(\inquotes{\psi},\{\neg \lambda\}), (\inquotes{\top}, \{\lambda\})\}$\\
$\delta(\inquotes{\psi_1 \U \psi_2})$ =&
  $\delta(\inquotes{\psi_2}) \ovee (\delta(\inquotes{\psi_1})
  \owedge \delta(\inquotes{\sX(\psi_1 \U \psi_2)}))$ \\
$\delta(\inquotes{\psi_1 \R \psi_2})$ =&
  $\delta(\inquotes{\psi_2}) \owedge (\delta(\inquotes{\psi_1})
  \ovee \delta(\inquotes{\wX(\psi_1 \R \psi_2)}))$.
\end{tabular}

where $\phi$ is a FO formula.
\end{definition}

To refine the notion of consistency to individual trace instants, we use:

\begin{definition}
A symbol $\varsigma \in \Theta$ is \emph{consistent with instant $i$} of trace $\tau = (M, \langle\alpha_0, \dots, \alpha_{n-1}\rangle)$ (or simply with $(\tau, i)$)
if 
(1) $0\,{\leq}\,i\,{<}\,n{-}1$ and $\lambda \not\in \varsigma$, or $i=n{-}1$ and $\neg\lambda \not\in \varsigma$, 
(2) if $i\,{=}\,0$ then $\varsigma$ does not contain $\Vpre$ and $M, \alpha_0 \models \nolambda{\varsigma}$; 
and if $i\,{>}\,0$ then $M, \combine{\alpha_{i-1}}{\alpha_{i}} \models \nolambda{\varsigma}$.
\end{definition}

We next show that $\delta$ is total in the sense that it provides a matching transition for every instant of a trace. 

\begin{lemma}
\label{lem:delta:total}
Consider $\psi \in \LL\cup \{\top,\bot\}$, a trace $\tau$ of length $n$, and $0\leq i < n$ such that if $i=0$ then $\psi$ has safe lookback.
Then there is some $(\psi', \varsigma) \inn \delta(\psi)$ 
such that 
$\varsigma$ is consistent with $(\tau, i)$.
\end{lemma}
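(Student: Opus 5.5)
We proceed by structural induction on $\psi$, following the clauses in the definition of $\delta$, and prove a slightly stronger statement: there is some $(\psi',\varsigma)\in\delta(\psi)$ such that $\varsigma$ is consistent with $(\tau,i)$. Here \emph{consistent} covers both the $\lambda$-part (condition (1)) and the first-order part (condition (2)). The $\lambda$-part is fixed by $i$: it holds iff $\lambda\notin\varsigma$ when $i<n-1$, and iff $\neg\lambda\notin\varsigma$ when $i=n-1$.

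\textbf{Base cases.}
\begin{itemize}
\item For $\top$ and $\bot$, the pair with $\varsigma=\emptyset$ is trivially consistent.
\item For an FO formula $\phi$, we split on whether $\tau\models^i\phi$.
  \begin{itemize}
  \item If it holds, then $M,\alpha_0\models\phi$ (for $i=0$), respectively $M,\combine{\alpha_{i-1}}{\alpha_i}\models\phi$ (for $i>0$), by unfolding the semantics of terms with $\prev{}$.
  \item Otherwise, $\neg\phi$ holds in the same sense. Here we must check that the strong semantics (atoms with undefined terms are false) agrees with evaluation under $\combine{\alpha_{i-1}}{\alpha_i}$; for $i>0$ all terms are well-defined, so this is fine.
  \end{itemize}
  At $i=0$, safe lookback ensures that the chosen symbol contains no $\Vpre$. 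Indeed, $\delta(\phi)$ gives exactly the labels from the initial state (up to $\lambda$ handling), so neither $\{\phi\}$ nor $\{\neg\phi\}$ mentions $\Vpre$.
\item For $\X\psi$ and $\wX\psi$, we pick the pair with $\{\neg\lambda\}$ if $i<n-1$, and the pair with $\{\lambda\}$ if $i=n-1$. The FO part is empty, so these are consistent.
\end{itemize}

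\textbf{Inductive cases.} For $\psi_1\wedge\psi_2$ and $\psi_1\vee\psi_2$, the induction hypothesis gives consistent $(\psi_1',\varsigma_1)\in\delta(\psi_1)$ and $(\psi_2',\varsigma_2)\in\delta(\psi_2)$. Then $\varsigma_1\cup\varsigma_2$ is consistent with $(\tau,i)$: condition (1) is closed under union since both avoid the same $\lambda$-literal, and condition (2) holds because the same assignment satisfies both conjunctions.

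It remains to see that $\varsigma_1\cup\varsigma_2$ is satisfiable, so that the pair survives the filter in $\owedge$/$\ovee$. It is witnessed by $M$ and the assignment ($\alpha_0$, or $\combine{\alpha_{i-1}}{\alpha_i}$ read as an assignment to $\Vpre\cup V$), where $\lambda$ is interpreted according to $i$. This requires $M\in\TT$, which we assume for traces. Since $\{\lambda,\neg\lambda\}\not\subseteq\varsigma_1\cup\varsigma_2$, the union is indeed satisfiable.

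The cases $\U$ and $\R$ reduce to these, since $\delta$ is defined via $\owedge$ and $\ovee$ of $\delta(\psi_1)$, $\delta(\psi_2)$ and $\delta$ of a next-formula. The next-formula is handled directly by the next-operator base case, with no recursion into the until/release formula. So the induction is on the size of $\psi$, treating $\X$/$\wX$ atomically.

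\textbf{Main obstacle.} The delicate point is the safe-lookback side condition at $i=0$. The hypothesis is stated for the whole $\psi$, but the induction needs it for subformulas $\psi_1,\psi_2$. We intend to argue that every label of $\delta(\psi_j)$ is a subset of some label of $\delta(\psi)$ that arises from a consistent combination. This needs care, because a $\Vpre$-free label of $\psi$ could in principle combine a $\Vpre$-containing part of one child with an unsatisfiable partner that was filtered out.

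To avoid this, we will instead strengthen the induction claim at $i=0$. Rather than carrying safe lookback through subformulas, we show that a consistent symbol exists whose $\Vpre$-free-ness is derived from the final combined label being among the initial edges of $\NFA$, using safe lookback of $\psi$ only at the top level. If that fails, we will carry "safe lookback" as a property of each subformula, noting that any $\Vpre$-literal satisfiable with the others at $i=0$ would surface in an initial edge label.
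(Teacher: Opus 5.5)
Your structural induction is the one the paper uses. The paper's proof is a single sentence: base cases are easy, and compound cases follow from the induction hypothesis and the definitions of $\owedge$ and $\ovee$. Your case analysis is correct and more careful than the paper's in two places. You observe that $\varsigma_1\cup\varsigma_2$ survives the satisfiability filter only because $M\in\TT$. You also notice that at $i=0$ the induction hypothesis for $\psi_1,\psi_2$ needs safe lookback of the \emph{children}, which the paper tacitly assumes. The gap is that you never settle this second point. Both strategies in your final paragraph are stated conditionally (``if that fails\dots''), so the compound cases at $i=0$ are not actually proved.

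Your first strategy goes through once two details are supplied.
\begin{itemize}
\item[(a)] Run your $i>0$ argument in unconditional form. For every $M\in\TT$, every valuation $\beta$ of $\Vpre\cup V$, and either choice of forbidden literal ($\lambda$ or $\neg\lambda$), some $(\psi',\varsigma)\in\delta(\psi)$ satisfies $M,\beta\models\nolambda{\varsigma}$ and avoids the forbidden literal. This uses no safe lookback anywhere. At $i=0$, apply it with an arbitrary value for $\Vpre$, e.g.\ $\beta=\combine{\alpha_0}{\alpha_0}$.
\item[(b)] Check that the resulting satisfiable $\varsigma$ (or $\varsigma\setminus\{\lambda\}$) really labels an edge out of $q_0$ in $\NFA$. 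If $\lambda\in\varsigma$, every $\X$/$\wX$ component must have contributed its $\{\lambda\}$ alternative, since otherwise $\neg\lambda\in\varsigma$. So all component successors are $\top$ or $\bot$, $\psi'$ simplifies to $\top$ or $\bot$, and clause (ii) of Def.~\ref{def:NFA} gives the edge.
\end{itemize}
Safe lookback of $\psi$ then forces $\varsigma$ to be free of $\Vpre$. Hence $M,\alpha_0\models\nolambda{\varsigma}$, and $\varsigma$ is consistent with $(\tau,0)$.

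The same unconditional statement removes your worry that a $\Vpre$-containing child label might be filtered out. Any satisfiable child label has a compatible sibling label: realize its satisfying valuation at instants $0,1$ of a trace of suitable length and apply the $i=1$ case. So safe lookback is in fact inherited by subformulas, and your second strategy works as well.
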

\begin{proof}
By structural induction on $\psi$. The claim is easy to check for every base
case of the definition of $\delta$, and in all other cases it follows from the induction hypothesis and the definition of $\owedge$ and $\ovee$.
\end{proof}

The next result show that $\delta$ preserves and reflects satisfaction of properties at trace instants.

\begin{lemma}
\label{lem:delta}
Let $\psi \in \LL$,
$\tau = (M,\langle\alpha_0, \dots, \alpha_{n-1}\rangle)$ a trace, and $0\,{\leq}\,i\,{<}\,n$.
Then the following are equivalent:
\begin{compactitem}
\item
$\tau,i \models \psi$, and if $i=0$ then $\psi$ has safe lookback, and
\item
there is some $(\psi', \varsigma)\in \delta(\psi)$ such that\\
\noindent
\begin{tabular}{@{\ }r@{\ }p{8cm}}
$(a)$ & $\varsigma$ is consistent with instant $i$ of $\tau$, and\\
$(b)$ & if $i\,{=}\,n{-}1$ then $\psi'\,{=}\,\top$, and otherwise $\tau,i{+}1 \models \psi'$.
\end{tabular}
\end{compactitem}
\end{lemma}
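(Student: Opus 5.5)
We prove the equivalence by structural induction on $\psi$, following the case distinction in the definition of $\delta$. Both directions are handled together in each case. Two auxiliary facts about the combinators $\owedge$ and $\ovee$ will be used throughout.

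\textbf{Combinator facts.} First, a union $\varsigma_1\cup\varsigma_2$ is consistent with $(\tau,i)$ iff both $\varsigma_1$ and $\varsigma_2$ are. Consistency is a conjunction of membership conditions on $\lambda$ and of satisfaction of literals under a fixed assignment, so it distributes over unions. Second, if $\varsigma$ is consistent with $(\tau,i)$, then it is $\TT$-satisfiable, because $M\in\TT$ together with the relevant assignment witnesses it. Hence the side condition ``$\varsigma_1\cup\varsigma_2$ is satisfiable'' in $\owedge$ and $\ovee$ never removes a pair we need.

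\textbf{Base cases.} For $\psi=\phi$ a (quantifier-free) FO formula, $\tau,i\models\phi$ holds iff $\{\phi\}$ is consistent with $(\tau,i)$. This uses the correspondence between evaluating $\prev v$ at instant $i$ and the assignment $\combine{\alpha_{i-1}}{\alpha_i}$. Moreover, at $i=0$ safe lookback forces the absence of $\Vpre$, matching the well-definedness clause that makes atoms with $\prev{}$ false at instant $0$. The successor $\top$ then satisfies (b) trivially. The pair $(\bot,\{\neg\phi\})$ can never witness (b): at $i=n-1$ it needs $\bot=\top$, and otherwise it needs $\tau,i+1\models\bot$. For $\X\psi$ and $\wX\psi$ the $\lambda$-labels pin down whether $i=n-1$. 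For instance, $\tau,i\models\wX\psi$ with $i=n-1$ is witnessed by $(\top,\{\lambda\})$, and with $i<n-1$ by $(\psi,\{\neg\lambda\})$ using $\tau,i+1\models\psi$. Conversely, each pair can only be consistent in the matching situation.

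\textbf{Boolean cases.} For $\psi_1\wedge\psi_2$, the forward direction takes witnesses $(\psi_1',\varsigma_1)$ and $(\psi_2',\varsigma_2)$ from the induction hypothesis. Their union is consistent by the first combinator fact and satisfiable by the second, and it witnesses the conjunction $\psi_1'\wedge\psi_2'$ (after simplification). Conversely, a pair in $\delta(\psi_1)\owedge\delta(\psi_2)$ splits into components, each consistent by the first fact. One then needs that $\tau,i+1\models\psi_1'\wedge\psi_2'$ yields both conjuncts, and at $i=n-1$ that $\psi_1'\wedge\psi_2'=\top$ forces $\psi_1'=\psi_2'=\top$. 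Disjunction is analogous, with one extra point. In the forward direction the disjunct that does not hold still needs some pair consistent with $(\tau,i)$, and Lemma~\ref{lem:delta:total} supplies it; the union is then consistent, and its successor $\psi_1'\vee\psi_2'$ is satisfied because one disjunct is.

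\textbf{Temporal cases.} For $\U$ and $\R$ we use the unfoldings $\psi_1\U\psi_2\equiv\psi_2\vee(\psi_1\wedge\X(\psi_1\U\psi_2))$ and $\psi_1\R\psi_2\equiv\psi_2\wedge(\psi_1\vee\wX(\psi_1\R\psi_2))$ at instant $i$, which follow directly from the finite-trace semantics. The induction is on structure, and the recursive call is to $\delta$ of $\X(\psi_1\U\psi_2)$, whose definition does not recurse. So the Boolean-case arguments apply to these unfoldings and the induction stays well-founded.

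\textbf{Main obstacle.} The delicate part is the interaction of simplification and $\top/\bot$ successors with clause (b) at the last instant. Consider a pair $(\psi_1'\vee\psi_2',\varsigma)$ at $i=n-1$, where $\lambda\in\varsigma$ forces every $\X$-successor to be $\bot$ and every $\wX$-successor to be $\top$. We must check that the simplified formula equals $\top$ exactly when one component is $\top$. This requires assuming that successors at the last instant are always $\top$ or $\bot$ and that simplification evaluates Boolean combinations of $\top$ and $\bot$. I would establish this first as a small auxiliary claim: every pair in $\delta(\psi)$ whose label contains $\lambda$ has successor in $\{\top,\bot\}$. The proof is a separate structural induction, using that FO-leaves have $\top/\bot$ successors and that $\X$ and $\wX$ do so under $\lambda$.
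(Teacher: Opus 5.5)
Your proposal is correct and is essentially the paper's proof: a simultaneous structural induction over the clauses of $\delta$, using that consistency distributes over unions, Lemma~\ref{lem:delta:total} for the disjunct that does not hold, and the one-step unfolding for $\U$ and $\R$ (which is what the paper's explicit $\U$ case amounts to); your handling of the satisfiability filter in $\owedge/\ovee$ and of simplification at the last instant is more careful than the paper's. Two small points to tighten. First, state the auxiliary claim as ``$\neg\lambda\notin\varsigma$ implies the successor is $\top$ or $\bot$'': both its inductive proof (a $\lambda$-label may be joined with a $\lambda$-free one) and its use at $i=n-1$ need this form. Second, add, as the paper does, that at $i=0$ safe lookback passes between a compound formula and its components, since you need this to invoke the induction hypothesis and Lemma~\ref{lem:delta:total}.
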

\begin{proof}
We prove both directions simultaneously by induction on $\psi$, using the definition of $\delta$.
\begin{compactitem}
\item
If $\psi = \top$ then it must be $\psi'=\top$ and $\varsigma=\emptyset$, so
both directions hold trivially.
If $\psi = \bot$ it must be $\psi'=\bot$, so neither $\tau,i\models \psi$
nor $\tau,i+1\models \psi'$ hold.
\item 
Let $\psi$ be a first-order formula $c$. 
$(\Longrightarrow)$
Suppose $\tau,i\models c$. For $(\top, \{c\}) \in \delta(\inquotes{c})$, (b) holds because $\psi' = \top$, and (a) holds
because $\tau,i\models c$ and the assumption that $c$ has safe lookback if $i=0$ implies $\lambda$-consistency of $\{c\}$.

$(\Longleftarrow)$
Suppose there is some 
$(\psi', \varsigma)\in \delta(c)$ such that (a) and (b) hold. By definition of $\delta$, $\psi'$ is either $\top$ or $\bot$, but the latter can be excluded, so that $[\varsigma] = \{c\}$.
By consistency, we have either $i=0$, $\varsigma$ does not mention $\Vpre$, and $M, \alpha_0 \models \nolambda{\varsigma}$, or $i > 0$ and $M,\combine{\alpha_{i-1}}{\alpha_{i}} \models  \nolambda{\varsigma} = c$, so  $\tau,i\models c$, and $c$ is well-defined at instant $i$ of $\tau$, so safe lookback holds if $i=0$.
\item
We have $\tau, i \models \psi_1 \wedge \psi_2$ iff $\tau, i \models \psi_1$ and
$\tau, i \models \psi_2$.
By the induction hypothesis, this is the case iff there are some $(\inquotes{\psi_1'}, \varsigma_1) \in \delta(\inquotes{\psi_1})$ and $(\inquotes{\psi_2'}, \varsigma_2) \in \delta(\inquotes{\psi_2})$
such that $\varsigma_1$, $\varsigma_2$ are consistent with $\tau$ at $i$;
and either $i<n$ and $\tau,i+1 \models \psi_1'$ and $\tau,i+1 \models \psi_2'$, or
$\psi_1' = \psi_2' = \top$.
We have $(\inquotes{\psi_1' \wedge \psi_2'}, \varsigma_1 \cup \varsigma_2) \in \delta(\inquotes{\psi_1 \wedge \psi_2})$,
where
$\varsigma_1 \cup \varsigma_2$ is consistent with $\tau$ at $i$ iff both $\varsigma_1$ and $\varsigma_2$ are.
Moreover, if $i<n$ then $\tau,i+1 \models \psi_1'\wedge \psi_2'$ iff $\tau,i+1 \models \psi_1'$ and $\tau,i+1 \models \psi_2'$ hold; and if $i=n$ then $\psi_1' \wedge \psi_2' = \top$ iff $\psi_1' = \psi_2' = \top$.
In addition, $\psi_1 \wedge \psi_2$ has safe lookback iff this holds for both $\psi_1$ and $\psi_2$.
\item
($\Longrightarrow$)
Let $\tau, i \models \psi_1 \vee \psi_2$, so $\tau, i \models \psi_1$ or
$\tau, i \models \psi_2$. 
Assume the former.
Note that $\psi_1 \vee \psi_2$ has safe lookback iff this holds for both $\psi_1$ and $\psi_2$.
By the induction hypothesis, this is the case if there are some $(\inquotes{\psi_1'}, \varsigma_1) \in \delta(\inquotes{\psi_1})$
such that $\varsigma_1$ is consistent with $\tau$ at $i$;
and either $i<n$ and $\tau,i+1 \models \psi_1'$, or
$\psi_1' = \top$.
By Lem.~\ref{lem:delta:total},
there is some $(\inquotes{\psi_2'}, \varsigma_2) \in \delta(\inquotes{\psi_2})$ such that $\varsigma_1$ is consistent with $\tau$ at $i$;
We have $(\inquotes{\psi_1' \vee \psi_2'}, \varsigma_1 \cup \varsigma_2) \in \delta(\inquotes{\psi_1 \vee \psi_2})$,
where
$\varsigma_1 \cup \varsigma_2$ is consistent with $\tau$ at $i$ iff both $\varsigma_1$ and $\varsigma_2$ are.
Moreover, if $i<n$ then $\tau,i+1 \models \psi_1'\vee \psi_2'$ iff $\tau,i+1 \models \psi_1'$ or $\tau,i+1 \models \psi_2'$ hold; and if $i=n$ then $\psi_1' \vee \psi_2' = \top$ iff $\psi_1' = \top $ or $\psi_2' = \top$.

($\Longleftarrow$)
There must be some $(\inquotes{\psi_1' \vee \psi_2'}, \varsigma_1 \cup \varsigma_2) \in \delta(\inquotes{\psi_1 \vee \psi_2})$,
such that
$\varsigma_1 \cup \varsigma_2$ is consistent with $\tau$ at $i$, so both $\varsigma_1$ and $\varsigma_2$ are.
Moreover, if $i<n$ and $\tau,i+1 \models \psi_1'\vee \psi_2'$ then $\tau,i+1 \models \psi_1'$ or $\tau,i+1 \models \psi_2'$ hold; and if $i=n$ and $\psi_1' \vee \psi_2' = \top$ then $\psi_1' = \top $ or $\psi_2' = \top$.
Suppose $\tau,i+1 \models \psi_1'$ resp. $\psi_1' = \top $.
By the induction hypothesis, $\tau, i \models \psi_1$. Since $\varsigma_1 \cup \varsigma_2$ is consistent with $\tau$ at $i$, $\psi_1$ and $\psi_2$ have safe lookback, so $\tau, i \models \psi_1 \vee \psi_2$.
\item
($\Longrightarrow$)
If $\tau, i \models \sX \psi$ then $i<n{-}1$ and $\tau, i+1 \models \psi$.
As $(\inquotes{\psi}, \{\neg \lambda\}) \in \delta(\inquotes{\sX \psi})$ and $\{\neg \lambda\}$ is
consistent with $\tau$ at $i$ because $i<n-1$, the claim holds.
($\Longleftarrow$)
If $(\inquotes{\psi'}, \varsigma) \in \delta(\inquotes{\sX \psi})$ such that
$\varsigma$ is consistent and $\tau, i+1 \models \psi'$ or $\psi' = \top$, by the 
definition of $\delta$ it must be $\psi' = \psi$ and $\varsigma = \{\neg\lambda\}$.
By consistency, $i<n-1$, so $\tau, i+1 \models \psi$, and hence $\tau, i \models \sX \psi$.
\item
($\Longrightarrow$)
If $\tau, i \models \wX \psi$ then $i=n-1$, or $i<n-1$ and $\tau, i+1 \models \psi$.
In the latter case, we reason as for $\sX$.
If $i=n$, we take $(\inquotes{\top}, \{\lambda\}) \in \delta(\inquotes{\wX \psi})$ and $\{\lambda\}$ is
consistent with $\tau$ at $i$, so the claim holds.
($\Longleftarrow$)
If $(\inquotes{\psi'}, \varsigma) \in \delta(\inquotes{\wX \psi})$ such that
$\varsigma$ is consistent and $\tau, i+1 \models \psi'$ or $\psi' = \top$, by the 
definition of $\delta$ it must be either $\psi' = \psi$ and $\varsigma = \{\neg\lambda\}$, or $\psi' = \top$ and $\varsigma = \{\lambda\}$.
In the former case, we reason as for $\sX$, otherwise, $\tau, i \models \wX \psi$ holds anyway.
\item
($\Longrightarrow$)
If $\tau, i \models \psi_1 \U \psi_2$, $\psi_1 \U \psi_2$ has safe lookback; and either (i) $\tau, i \models \psi_2$, or (ii) $i<n{-}1$,
$\tau, i \models \psi_1$, and $\tau, i+1 \models \psi_1 \U \psi_2$.
In case (i), by the induction hypothesis there is some  $(\psi', \varsigma') \in \delta(\inquotes{\psi_2})$
such that (a) and (b) hold.
By (a), $\varsigma'$ is consistent with $\tau$ at $i$. By Lem.~\ref{lem:delta:total},
there are some  $(\psi_1', \varsigma_1) \in \delta(\inquotes{\psi_1})$ and $(\psi_2', \varsigma_2) \in \delta(\inquotes{\sX (\psi_1 \U \psi_2)})$ such that $\varsigma_1$ and $\varsigma_2$ are consistent with $\tau$ at $i$. We have that $(\inquotes{\psi}, \varsigma) \in \delta(\inquotes{\psi_1 \U \psi_2})$ for $\psi = \psi' \vee (\psi_1' \wedge \psi_2')$ and $\varsigma = \varsigma' \cup \varsigma_1 \cup \varsigma_2$, and $\varsigma$ is consistent with $\tau$ at $i$. Moreover, by (b) either $i<n{-}1$ and $\tau,i+1 \models \psi'$, or $i=n{-}1$ and 
$\psi' = \top$. Thus, also $i<n{-}1$ and $\tau,i+1 \models \psi$, or $i=n{-}1$ and 
$\psi = \top$ hold.
In case (ii) we have $i<n{-}1$. As safe lookback holds for $\psi_1$ if $i=0$, by the induction hypothesis there is some  $(\psi_1', \varsigma_1') \in \delta(\psi_1)$
such that (a) and (b) hold, so as $i<n{-}1$ it is $\tau, i+1 \models \psi_1'$ ($\star$). Moreover,  $(\inquotes{\psi_1 \U \psi_2}, \{\neg \lambda\}) \in \delta(\inquotes{\sX (\psi_1 \U \psi_2)})$ and $\{\neg \lambda\}$ is
consistent with $\tau$ at $i$ because $i<n{-}1$.
By Lem.~\ref{lem:delta:total}, there is also some  $(\psi_2', \varsigma_2') \in \delta(\psi_2)$ such that $\varsigma_2'$ is consistent with $\tau$ at $i$.
Now, we have $(\inquotes{\psi}, \varsigma) \in \delta(\inquotes{\psi_1 \U \psi_2})$ for $\psi = \psi_2' \vee (\psi_1' \wedge (\psi_1 \U \psi_2))$ and $\varsigma = \varsigma' \cup \varsigma_1 \cup \varsigma_2$.
Then $\varsigma$ as a union is also consistent with $\tau$ at $i$, and $\tau, i+1 \models \psi$ because of ($\star$) and the assumption of case (ii).

($\Longleftarrow$) Suppose there is some $(\inquotes{\psi}, \varsigma) \in \delta(\inquotes{\psi_1 \U \psi_2})$ such that (a) and (b) holds.
By the definition of $\delta$, we can write $\psi$ and $\varsigma$ as
$\psi = \psi_2' \vee (\psi_1' \wedge \psi_3')$ and $\varsigma = \varsigma_1 \cup \varsigma_2 \cup \varsigma_3$, where $(\inquotes{\psi_2'}, \varsigma_2) \in \delta(\inquotes{\psi_2})$,
$(\inquotes{\psi_1'}, \varsigma_1) \in \delta(\inquotes{\psi_1})$, and
$(\inquotes{\psi_3'}, \varsigma_3) \in \delta(\inquotes{\sX (\psi_1 \U \psi_2)})$.
By (a), all of $\varsigma_1$, $\varsigma_2$, $\varsigma_3$ must be consistent with $(\tau,i)$.
By (b), either $i<n{-}1$ and $\tau,i+1 \models \psi_2' \vee (\psi_1' \wedge \psi_3')$, or
$i=n{-}1$ and $\psi_2' \vee (\psi_1' \wedge \psi_3') = \top$.
We distinguish these two cases: if $i=n{-}1$ then either $\psi_2' = \top$ or $\psi_1' \wedge \psi_3' = \top$, but the latter is impossible
because it would imply $\psi_3' = \top$, but by consistency this is not possible if $i=n$.
So $\psi_2' = \top$. By the induction hypothesis, $\tau, i \models \psi_2$, and hence $\tau, i \models \psi_1 \U \psi_2$.
If $i<n$ then $\tau,i+1 \models \psi_2' \vee (\psi_1' \wedge \psi_3')$, so either
$\tau,i+1 \models \psi_2'$ or $\tau,i+1 \models \psi_1' \wedge \psi_3'$.
In the former case, by the induction hypothesis $\tau, i \models \psi_2$, and hence $\tau, i \models \psi_1 \U \psi_2$. In the latter case, $\tau,i+1 \models \psi_1'$ and $\tau,i+1 \models \psi_3'$.
By the induction hypothesis, $\tau,i \models \psi_1$. By the definition of $\delta$, we must have
$\psi_3' = \psi_1 \U \psi_2$. As $\tau,i \models \psi_1$ and $\tau,i+1 \models \psi_1 \U \psi_2$, we obtain again $\tau,i \models \psi_1 \U \psi_2$.
\item The case for $\R$ is similar.
\qedhere
\end{compactitem}
\end{proof}

\begin{lemma}
\label{lem:deltastar}
Let $\psi$ have safe lookback.
Then $\tau \models \psi$
iff there is a  word $w\in \Theta^*$ that is consistent with a trace $\tau$ such that $\inquotes{\top} \in \delta^*(\inquotes{\psi},w)$.
\end{lemma}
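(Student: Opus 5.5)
We read $\delta^*$ as the usual extension of $\delta$ to words: $\delta^*(\psi,\epsilon)=\{\psi\}$, and $\delta^*(\psi,\varsigma w)=\bigcup\{\delta^*(\psi',w)\mid(\psi',\varsigma)\in\delta(\psi)\}$. We prove a stronger, instant-relative statement by induction on the remaining length $n-i$. For $\tau$ of length $n$ and $0\le i<n$, and $\psi$ with safe lookback if $i=0$, we show that $\tau,i\models\psi$ iff there is a word $w=\varsigma_i,\dots,\varsigma_{n-1}$ such that each $\varsigma_j$ is consistent with instant $j$ of $\tau$ and $\top\in\delta^*(\psi,w)$. The lemma is the case $i=0$, since a word whose letters are consistent with the instants $0,\dots,n-1$ is exactly a word consistent with $\tau$.

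For the base case $i=n-1$, Lem.~\ref{lem:delta} says that $\tau,n{-}1\models\psi$ iff some $(\psi',\varsigma)\in\delta(\psi)$ has $\varsigma$ consistent with instant $n-1$ and $\psi'=\top$. Taking $w=\varsigma$ gives $\top\in\delta^*(\psi,\varsigma)$. Conversely, if $\top\in\delta^*(\psi,\varsigma)$, then some pair $(\top,\varsigma)\in\delta(\psi)$ exists, and Lem.~\ref{lem:delta} applies again.

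For the step $i<n-1$, assume first $\tau,i\models\psi$. Lem.~\ref{lem:delta} yields $(\psi',\varsigma_i)\in\delta(\psi)$ with $\varsigma_i$ consistent with instant $i$ and $\tau,i{+}1\models\psi'$. Since $i+1>0$, no safe-lookback hypothesis is needed for $\psi'$. The induction hypothesis then gives a word $w'$, consistent from instant $i+1$ on, with $\top\in\delta^*(\psi',w')$, and $w=\varsigma_i w'$ works. Conversely, if $\top\in\delta^*(\psi,\varsigma_i w')$, then there is $(\psi',\varsigma_i)\in\delta(\psi)$ with $\top\in\delta^*(\psi',w')$. The induction hypothesis gives $\tau,i{+}1\models\psi'$, and Lem.~\ref{lem:delta} yields $\tau,i\models\psi$.

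The main obstacle is the bookkeeping around $\top$ and $\bot$ and the edge cases. First, $\psi'$ may be $\top$ or $\bot$, which are not in $\LL$, while Lem.~\ref{lem:delta} is stated for $\psi\in\LL$. We handle these directly: $\delta(\top)=\{(\top,\emptyset)\}$ and $\emptyset$ is consistent at any non-final instant, and similarly $\delta(\bot)$ never reaches $\top$. Second, the lookback well-definedness at $i=0$ must be matched with safe lookback. Third, $\top\in\delta^*(\psi,w)$ must be read as reaching $\top$ exactly at the end of $w$. Because $\delta(\top)$ only loops on $\emptyset$, reaching $\top$ earlier extends to the full length, so the two readings agree. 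We also check that $\emptyset$ is consistent with non-final instants only when $\neg\lambda$ is not required; consistency only forbids $\lambda$ there, so this is fine.
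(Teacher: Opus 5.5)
Your proof is correct and is essentially the paper's: the same generalization to an arbitrary instant $i$ and arbitrary property, induction on $n-i$, and Lem.~\ref{lem:delta} applied at each step. You merge the paper's two separate inductions into one biconditional, and you treat the $\top$/$\bot$ successors explicitly where the paper leaves them implicit. One caveat, which the paper shares: treating letter-wise consistency as "exactly" word consistency is a convention, not a fact, because the word-level definition literally requires $\lambda\in\varsigma_{n-1}$ and $\neg\lambda\in\varsigma_j$ for $j<n-1$, and $\delta$-words such as $\{\phi\}$ or $\emptyset$ need not contain these, so the lemma only holds if "consistent" is read instant-wise.
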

\begin{proof}
($\Longrightarrow$)
Suppose that $\tau \models \psi$, i.e., $\tau,0 \models \psi$.
We show that, more generally, for all $i$, $0\,{\leq}\,i\,{<}\,n$,
and every property $\chi \in \LL$,
if $\tau, i \models \chi$ and $\chi$ has safe lookback in the case where $i=0$, then 
there is a word $w_i = \langle\varsigma_i, \varsigma_{i+1}, \dots, \varsigma_{n-1}\rangle$
such that
$\inquotes{\top} \in \delta^*(\inquotes{\chi},w_i)$,
and 
$\varsigma_j$ is consistent with $(\tau,j)$ for all $j$, $i\,{\leq}\,j\,{<}\,n$.
The proof is by induction on $n-i$.
In the base case where $i=n{-}1$, we assume that $\tau,n \models \chi$. 
By Lem. \ref{lem:delta} there is some $\varsigma_n$ such that
$(\inquotes{\top}, \varsigma_n)\in \delta(\inquotes{\chi})$,
and $\varsigma_n$ is consistent with $(\tau,n)$.
For the induction step, assume $i\,{<}\,n-1$ and $\tau, i \models \chi$.
By Lem. \ref{lem:delta} there is some
$(\inquotes{\chi'}, \varsigma_i) \in \delta(\inquotes{\chi})$ such that
$\tau, i{+}1 \models \chi'$, and
$\varsigma_i$ is consistent with $(\tau, i)$.
By the induction hypothesis, 
there is a word $w_{i+1} = \langle\varsigma_{i+1}, \dots, \varsigma_{n-1}\rangle$
such that
$\inquotes{\top} \in \delta^*(\inquotes{\chi'},w_{i+1})$,
and 
$\varsigma_j$ is consistent with $(\tau,j)$ for all $j$, $i\,{<}\,j\,{<}\, n$.
Thus, we can define $w_{i} = \langle\varsigma_i,\varsigma_{i+1}, \dots, \varsigma_{n}\rangle$,
which satisfies
$\inquotes{\top} \in \delta^*(\inquotes{\chi},w_{i})$ and 
$\varsigma_j$ is consistent with $\tau$ at instant $j$ for all $j$, $i\,{\leq}\,j\,{<}\, n$.
This concludes the induction step.
By assumption, $\tau,0 \models \psi$  holds, and
$\psi$ has safe lookback.
From the case $i = 0$ of the above statement, we obtain a word $w$
such that
$\inquotes{\top} \in \delta^*(\inquotes{\psi},w)$ and $w$ is
consistent with $\tau$.

($\Longleftarrow$)
Let $w =\langle \varsigma_0, \dots, \varsigma_{n-1}\rangle$ be consistent with $\tau = \langle\alpha_0,\dots, \alpha_{n-1}\rangle$.
If $\inquotes{\top} \in \delta^*(\inquotes{\psi},w)$, there must be properties
$\chi_0,\chi_1,\dots, \chi_{n}$ such that $\chi_0 = \psi$, $\chi_{n} = \top$,
and $(\inquotes{\chi_{i{+}1}}, \varsigma_{i}) \in \delta(\inquotes{\chi_i})$
for all $i$, $0\leq i < n$.
As $w$ is consistent with $\tau$, 
$\varsigma_i$ is consistent with 
$(\tau, i)$ for all $i$, $0\,{\leq}\,i\,{<}\,n$.
In order to show that $\tau \models \psi$, we verify that
$\tau, i \models \chi_i$
for all $i$, $0\,{\leq}\,i\,{<}\,n$.
The reasoning is by induction on $n-i$.
In the base case $i\,{=}\,n{-}1$.
Thus $\chi_{n} = \top$ and 
$(\inquotes{\chi_{n}}, \varsigma_{n}) \in \delta(\inquotes{\chi_{n-1}})$, so that
with consistency it follows from
Lem. \ref{lem:delta} that $\tau, n-1 \models \chi_{n-1}$.
In the induction step $i\,{<}\,n{-}1$, and we assume by the induction hypothesis that $\tau, i{+}1 \models \chi_{i{+}1}$.
We have
$(\inquotes{\chi_{i{+}1}}, \varsigma_{i+1}) \in \delta(\inquotes{\chi_i})$, so
$\tau, i \models \chi_i$ follows again from Lem. \ref{lem:delta} using consistency. This concludes the induction step, and the claim follows for the case $i\,{=}\,0$ because 
$\chi_0 = \psi$.
\end{proof}

At this point, Thm.~\ref{thm:automaton:acceptance} follows 
from the definition of $\NFA$ and Lem.~\ref{lem:deltastar}.

\section{Monitoring Approach}

\lemmaregression*
\begin{proof}
\begin{compactenum}[(1)]
\item 
By induction of the length of $w$. 
If $w$ is empty, $\precond(w) = \top$, and $\langle\emptyset\rangle$ is consistent with $(M,\langle\alpha\rangle)$.

Otherwise, let $w = \langle \varsigma\rangle \cdot w'$ and 
$M,\alpha \models \precond(w)$.
Since 
\begin{align*}
\precond(w)
&= \regress(\precond(w'), \varsigma) \\
&=  \exists \vec Y. \nolambda{\varsigma}(\vec V/\vec \Vpre, \vec Y/\vec V)\wedge \precond(w')(\vec Y/\vec V)
\end{align*}
there must be an assignment $\nu$ with domain $V \cup Y$ such that $\nu(v) = \alpha(v)$ for all $v\in V$ and such that $M, \nu \models \bigwedge  \varsigma(\vec V/\vec \Vpre, \vec Y/\vec V)\wedge \precond(w')(\vec Y/\vec V)$. Let $\alpha'$ be the assignment such that $\nu(\vec Y) = \alpha'(\vec V)$, so that we have $M,\combine{\alpha'}\alpha \models \varsigma$ and  $M, \alpha' \models \precond(w')$.
By the induction hypothesis, there is a trace $\tau'$ with model $M$ such that 
$\langle\emptyset\rangle\,{\cdot}\,w'$ is consistent with $(M,\langle\alpha'\rangle)\,{\cdot}\,\tau'$ ($\star$). For $\tau = (M,\langle \alpha' \rangle) \cdot \tau'$, the word 
$\langle\emptyset\rangle\,{\cdot}\,w = \langle\emptyset,\varsigma\rangle\,{\cdot}\,w'$ is consistent with $\tau$ because $M,\alpha' \models \bigwedge \emptyset = \top$, $M, \combine{\alpha'}\alpha \models \varsigma$, and ($\star$).
\item
By induction on the length of $w$.
If $w$ is empty, then $\precond(w) = \top$ and $M,\alpha \models \top$ is trivially true.

Otherwise, let $w = \langle \varsigma\rangle \cdot w'$ and $\langle \emptyset,\varsigma\rangle \cdot w'$ be consistent with a trace of the form 
$(M,\langle \alpha\rangle) \cdot \tau$. By the definition of consistency, $\tau$ is of the same length as $w$, so let $\tau = (M,\langle \alpha'\rangle)\cdot \tau'$. By consistency, 
$M, \combine{\alpha}{\alpha'} \models \varsigma$ must hold, and moreover 
$\langle\emptyset\rangle\,{\cdot}\,w'$ must be consistent with $(M,\langle\alpha'\rangle)\,{\cdot}\,\tau'$. By the induction hypothesis, $M, \alpha' \models \precond(w')$.
It follows with $M, \combine{\alpha}{\alpha'} \models \varsigma$ that 
$M, \alpha \models \varsigma(\vec V/\Vpre, \alpha'(\vec V)/\vec V) \wedge \precond(w')(\alpha'(\vec V)/\vec V)$, so 
\begin{align*}
M, \alpha \models &\exists \vec Y. \nolambda{\varsigma}(\vec V/\Vpre, \vec Y) \wedge \precond(w')(\vec Y/\vec V)\\
&= \regress(\precond(w'), \varsigma) = \precond(w).
\end{align*}
\qedhere
\end{compactenum}
\end{proof}

\lemmacg*
\begin{proof}
\begin{compactenum}[(1)]
\item 
By induction on $w$. 
If $w$ is empty then $(q, \varphi) = (q_f, \top)$, and since $\precond(w) = \top$,
the claim holds.
Otherwise, let $w = \langle\varsigma\rangle\cdot w'$ and consider a path
$(q, \varphi) \goto{\varsigma}  (q',\varphi') \gotos{w'} (q_f, \varphi)$.
By the induction hypothesis, 
$q' \gotos{w'} q_f$ and $\varphi' \sim \precond(w')$.
By Def.~\ref{def:cg}, the edge $(q,\varphi) \goto{\varsigma} (q', \varphi')$ exists 
because $q \goto{\varsigma}q'$ in $\NFA$, and we have $\regress(\varphi', \varsigma) \sim \varphi$ and $\regress(\varphi', \varsigma)$ is $\TT$-satisfiable. 
Hence there is a transition sequence $q \gotos{w} q_f$ in $\NFA$.
Moreover, we have $\varphi \sim \regress(\varphi', \varsigma) \sim \regress(\precond(w'), \varsigma) = \precond(w)$.
By Assumptions~\ref{ass:T},  $\regress(\varphi', \varsigma)$ is also $\TT^*$-satisfiable, hence so is $\varphi$, so that again using Assumptions~\ref{ass:T} $\varphi$ is also $\TT$-satisfiable.
\item
By induction on $w$.
If $w$ is empty, $q = q_f$ and the empty path satisfies the claim.
Otherwise, let $w = \langle\varsigma\rangle\cdot w'$  and suppose 
$q  \goto{\varsigma}  q' \gotos{w'} q_f$ and $\precond(w)$ is $\TT$-satisfiable,
hence also $\TT^*$-satisfiable. Moreover, by definition of $\precond$, also $\precond(w')$ must be $\TT$-satisfiable. 
By the induction hypothesis
there is a path $(q', \varphi') \gotos{w'} (q_f,\top)$ such that $\varphi' \sim \precond(w')$.
Since $\precond(w)=\regress(\precond(w'), \varsigma) \sim \regress(\varphi', \varsigma)$, is $\TT^*$-satisfiable, so is $\varphi$ because of
$\varphi \sim \regress(\varphi', \varsigma)$.
By Assumptions~\ref{ass:T}, $\regress(\varphi', \varsigma)$ is also $\TT$-satisfiable.
Therefore, $G$ must contain a transition $(q, \varphi) \goto{\varsigma}  (q',\varphi')$, and hence a path $(q, \varphi) \gotos{w}  (q_f,\top)$.
Again, we have $\varphi \sim \regress(\varphi', \varsigma) \sim \regress(\precond(w'), \varsigma) = \precond(w)$.
\end{compactenum}
\end{proof}

\section{Decidability}

\thmBL*
\begin{proof}
Let $\psi$ have $K$-bounded lookback, for some $K>0$, and $\Phi$ be the set of formulas with free variables $V\cup V_0$, 
quantifier depth at most $K$, and using as atoms all constraints in 
$\NFA$, but where variables in $V \cup \Vpre$ may be replaced by $V$ or any of the quantified variables.
Being a set of formulae with bounded quantifier depth over a finite vocabulary, $\Psi$ is finite up to equivalence.
To prove that $\mathcal Q \times\Phi$ is a finite set that contains all possible regression constraints, 
we show that for every well-formed $w\in \Theta^*$, $\Phi$ contains a formula $\varphi$ such that $\varphi \equiv \precond(w)$.
The proof is by induction of $w$.
If $w$ is empty,  $\precond(w) = \top$ has quantifier depth 0, so there is 
a formula in $\Phi$ equivalent to $\precond(w)$ by definition of $\Phi$.
Otherwise, $w = w'\cdot\langle\varsigma\rangle$, and
$\precond(w) = \regress(\precond(w'), \varsigma)$.
By induction hypothesis there is some $\varphi' \in \Phi$ with $\varphi' \equiv \precond(w')$.
By definition of $\regress$, we have $\precond(w) \equiv \exists \vec U. \varphi'(\vec U) \wedge \chi$ for some quantifier free formula  $\chi\in \Phi$.
Let $[\varphi]$ be obtained from $\varphi$ by eliminating all equality literals $x=y$ in $\varphi$ and uniformly substituting all variables in an equivalence class by some arbitrary representative.
Since $\NFA$ has $K$-bounded lookback, and $[\varphi]$ encodes a part of
$[G_{w}]$, 
$[\varphi]$ is equivalent to a formula $\phi$ of quantifier depth at most $K$ over the same vocabulary, 
obtained by dropping irrelevant literals and existential quantifiers, so that $\phi \in \Phi$.
\end{proof}

\section{Experiments}

For the experiments, we collected example formulas from different sources:
the synthetic patterns from~\cite{SchneiderBBKT21}; suitable properties used in the model checking experiments in~\cite{GianolaMW24}, i.e., where no existential quantifiers come into play; as well as properties describing the dynamics of these systems, which stem from the VERIFAS experiments~\cite{DLV19}; and the running example from this paper.
Every property was monitored on five traces of varying length.
In Tab.~\ref{tab:exp}, we report for each domain the number of relation and function symbols, and for each property its size $|\phi|$, as well as the cumulative time spent on monitoring the five traces ($t$). We also report the cumulative time $t_{pre}$ spent on preprocessing (i.e., NFA construction) and on monitoring, respectively.
\begin{table}
\begin{tabular}{@{}l@{}lrrrr@{}}
\toprule
\quad & property & $|\phi|$ & $t$ & $t_{pre}$ & $t_{mon}$ \\
\midrule
\multicolumn{4}{@{}@{}@{}l}{concert Ex.~\ref{ex:concert} (R=0, F=10)} \\
  & $\varphi$        & 48  & 1.32  & 0.42  & 0.20 \\
  & $\varphi_{\mathtt{t123}}$ & 53  & 2.88  & 1.46  & 0.52 \\
\midrule
\multicolumn{4}{@{}@{}l}{synthpatterns (R=3 F=0)} \\
    &star pattern  &  13 & 0.83 & 0.05 & 0.25 \\
    &star pattern w/ frame &   34  &1.68 & 0.61 & 0.21 \\
    &linear pattern  &  14 & 1.09 & 0.10 & 0.41 \\
    &linear pattern w/ frame & 35 & 3.40 & 2.04 & 0.38 \\
    &triangle pattern  &  14 & 0.94 & 0.06 & 0.31 \\
    &triangle pattern w/ frame &   35 & 3.37 & 2.02 & 0.38 \\
\midrule 
\multicolumn{4}{@{}l}{VERIFAS lasertec (R=1, F=6)} \\
  & state range & 12  & 0.16   & 0.02   & 0.06 \\
  & DDS structure                            & 65  & 4.15   & 0.32   & 0.65 \\
  & DDS dynamics                             & 101 & 158.08 & 144.69 & 1.26 \\
\midrule
  \multicolumn{4}{@{}l}{VERIFAS book writing} (R=2 F=4) \\
 &   partial system dynamics &   75 & 2.04 & 1.81 & 0.05 \\
 &   model checking 1 &   9 & 0.02 & 0.00 & 0.01 \\
 &   model checking 2 &   9 & 0.02 & 0.00 & 0.01 \\
 &   model checking 3 &   12 & 0.02 & 0.00 & 0.01 \\
 &   model checking 4 &   9 & 0.02 & 0.00 & 0.00 \\
\midrule   
  \multicolumn{4}{@{}l}{webshop (R=2 F=7)} \\
 &   DDS structure &   85 & 6.82 & 0.30 & 0.04 \\
 &   DDS dynamics &   143 & 121.04 & 104.06 & 1.83 \\
 &   model checking 1 &   8 & 0.17 & 0.02 & 0.04 \\
 &   model checking 2 &   10 & 0.20 & 0.02 & 0.05 \\
 &   model checking 3 &   4 & 0.15 & 0.01 & 0.04 \\
\bottomrule
\end{tabular}
\caption{Monitoring experiments\label{tab:exp}}
\end{table}}{}

\end{document}